\begin{document}
%!TEX root = arxiv_head.tex

\newtheorem{theorem}{Theorem}
\newtheorem{lemma}{Lemma}
\newtheorem{assume}{Assumption}

\def \E{\mathbb E}
\def \P{\mathbb P}
\def \1{\mathds{1}}
\def \R{\mathbb R}
\def \bx{\boldsymbol{x}}
\def \bs{\boldsymbol{s}}
\def \bv{\boldsymbol{v}}
\def \bz{\boldsymbol{z}}
\def \be{\boldsymbol{e}}
\def \bX{\boldsymbol{X}}
\def \bT{\boldsymbol{T}}
\def \bD{\boldsymbol{D}}
\def \bB{\boldsymbol{B}}
\def \bE{\boldsymbol{E}}
\def \bM{\boldsymbol{M}}
\def \bG{\boldsymbol{G}}
\def \bC{\boldsymbol{C}}
\def \bV{\boldsymbol{V}}
\def \bU{\boldsymbol{U}}
\def \bS{\boldsymbol{S}}
\def \bI{\boldsymbol{I}}
\def \bJ{\boldsymbol{J}}
\def \bL{\boldsymbol{L}}
\def \bL{\boldsymbol{L}}
\def \bZ{\boldsymbol{Z}}
\def \b1{\boldsymbol{1}}
\def \T{{\cal T}}
\def \G{{\cal G}}
\def \S{{\cal S}}
\def \tr{\text{Tr}}
\def\L{\mathcal{L}}
\def\imwidth{\textwidth}
\newcommand{\kevin}[1]{{\color{red}{#1}}}

\newcommand{\mS}{\mathbb{S}}
\newcommand{\rank}{\mbox{rank}}
\newcommand{\diag}{\mbox{diag}}
\newcommand{\pg}{\langle \bL_t, \bG \rangle}
\newcommand{\pt}{\langle \bL_t, \bG^\star \rangle}

\title{Finite Sample Prediction and Recovery Bounds for Ordinal Embedding}

\author{
  Lalit Jain, University of Wisconsin-Madison\\
  Kevin Jamieson, University of California, Berkeley\\
  Robert Nowak, University of Wisconsin-Madison
}

\maketitle
\begin{abstract}
The goal of ordinal embedding is to represent items as points in a low-dimensional Euclidean space given a set of constraints in the form of distance comparisons like ``item $i$ is closer to item $j$ than item $k$''.  Ordinal constraints like this often come from human judgments.  To account for errors and variation in judgments, we consider the noisy situation in which the given constraints are independently corrupted by reversing the correct constraint with some probability. This paper makes several new contributions to this problem. First, we derive prediction error bounds for ordinal embedding with noise by exploiting the fact that the rank of a distance matrix of points in $\R^d$ is at most $d+2$. These bounds characterize how well a learned embedding predicts new comparative judgments.  Second, we investigate the special case of a known noise model and study the Maximum Likelihood estimator.  Third, knowledge of the noise model enables us to relate prediction errors to embedding accuracy.  This relationship is highly non-trivial since we show that the linear map corresponding to distance comparisons is non-invertible, but there exists a nonlinear map that is invertible.  Fourth, two new algorithms for ordinal embedding are proposed and evaluated in experiments.
\end{abstract}

\section{Ordinal Embedding}

Ordinal embedding, also known as non-metric multidimensional scaling, aims to represent items as points in $\R^d$ so that the distances between items agree as well as possible with a given set of ordinal comparisons such as item $i$ is closer to item $j$ than to item $k$.  This is a classic problem that is often used to visualize perceptual similarities \cite{shepard1962analysis,kruskal1964nonmetric}.
Recently, several authors have proposed a variety of new algorithms for learning a metric embedding from ordinal information \cite{agarwal2007generalized,tamuz2011adaptively,jamieson2011low,van2012stochastic,mcfee2011learning}. There has also been some theoretical progress towards characterizing the consistency of ordinal embedding methods. For example, it has been shown that the correct embedding can be learned in the limit as the number of items grows \cite{kleindessner2014uniqueness,terada2014local,arias2015some}.  However, a major shortcoming of all prior work is the lack of generalization and embedding error bounds for problems involving a finite number of items and observations.  This paper addresses this problem, developing error bounds for ordinal embedding algorithms with known observation models.  The bounds explicitly show the dependence on the number of ordinal comparisons relative to the number of items and the dimension of the embedding. In addition, we propose two new algorithms for recoverying embeddings: the first is based on unbiasing a nuclear norm constrained optimization and the second is based on projected gradient descent onto the space of rank-$d$ matrices, commonly referred to as hard-thresholding. Both methods match state-of-the-art performance while being simpler to implement and faster to converge.

\subsection{Ordinal Embedding from Noisy Data}

Consider $n$ points $\bx_1,\bx_2,\dots,\bx_n \in \R^d$.  Let $\bX =
[\bx_1 \cdots \bx_n] \in \R^{d \times n}$.  The {\em Euclidean distance matrix} $\bD^\star$ is defined to have elements $D_{ij}^\star = \|\bx_i-\bx_j\|_2^2$.  
Ordinal embedding is the problem of recovering $\bX$ given ordinal constraints on distances.  This paper focuses on ``triplet'' constraints of the form $D_{ij}^\star<D_{ik}^\star$, where $1 \leq i \neq j \neq k \leq n$.
Furthermore, we only observe noisy indications of these constraints, as follows. Each triplet $t=(i,j,k)$ has an associated probability $p_t$ satisfying
\[
p_t > 1/2 \ \Longleftrightarrow \ \|\bx_i-\bx_j\|^2 < \|\bx_i-\bx_k\|^2 \ .
\]
Let $\S$ denote a collection of triplets drawn independently and uniformly at random. And for each $t\in \S$  we observe an independent random variable $y_t =-1$ with probability $p_t$, and $y_t =1$ otherwise.
The goal is to recover the embedding $\bX$ from these data.
Exact recovery of $\bD^\star$ from such data requires a known {\em link} between $p_t$ and $\bD^\star$.  To this end, our main focus is the following problem. \\

\centerline{\fbox{\parbox[b]{\textwidth}{\underline{\bf Ordinal Embedding from Noisy Data} \\ \vspace{-.05in} \\
Consider $n$ points $\bx_1, \bx_2 \cdots,\bx_n$ in $d$-dimensional Euclidean space. 
Let $\S$ denote a collection of triplets and for each $t\in \S$  observe an independent random variable
$$y_t \ = \  \left\{\begin{array}{ll} -1 & w.p.\ f(D_{ij}^\star-D_{ik}^\star) \\ \ \\
1 & w.p.\  1-f(D_{ij}^\star-D_{ik}^\star) \end{array}\right. \ . $$
where the link function $f:\R \rightarrow [0,1]$ is known.
Estimate $\bX$ from $\S$, $\{y_t\}$, and $f$. }}}

For example, if $f$ is the logistic function, then for triplet $t=(i,j,k)$
\begin{align}\label{eqn:logistic_link}
p_t=\P(y_t = -1 ) \ = \ f(D_{ij}^\star-D_{ik}^\star) \ = \ \frac{1}{1+\exp(D_{ij}^\star-D_{ik}^\star)} \ , 
\end{align}
then $D_{ij}^\star-D_{ik}^\star = \log\big(\tfrac{1-p_t}{p_t}\big)$. However, we stress that we only require the existence of a link function for exact recovery of $\bD^\star$. Indeed, if one just wishes to {\em predict} the answers to unobserved triplets, then the results of Section~\ref{predict} hold for arbitrary $p_t$ probabilities. We note that a problem known as one-bit matrix completion has similarly used link functions to recover structure \cite{davenport20141}. However, while that work made direct measurements of the entries of the matrix, in this work we observe linear projections and as we will see later, the collection of these linear operators has a a non-empty kernel creating non-trivial challenges.

\subsection{Organization of Paper}

This paper takes the following approach to ordinal embedding.  First we derive prediction error bounds for ordinal embedding with noise in Section~\ref{predict}.  These bounds exploit the fact that the rank of a distance matrix of points in $\R^d$ is at most $d+2$. Then we consider the special case of a known observation model and the Maximum Likelihood estimator in Section~\ref{mle_sec}.  The link function enables us to relate prediction errors to error bounds on estimates of differences $\{D_{ij}^\star-D_{ik}^\star\}$.  In Section~\ref{embedding}, we study the non-trivial problem of recovering $\bD^\star$, and thus $\bX$, from the $\{D_{ij}^\star-D_{ik}^\star\}$. Lastly, in Section~\ref{exp}, two new algorithms for ordinal embedding are proposed and experimentally evaluated.

\subsection{Notation and Assumptions}
We will use $(\bD^\star,\bG^\star)$ to denote the distance and Gram matrices of the latent embedding, and $(\bD,\bG)$ to denote an arbitrary distance matrix and its corresponding Gram matrix.
The observations $\{y_t\}$ carry information about $\bD^\star$, but distance matrices are invariant to rotation and translation, and therefore it may only be possible to recover $\bX$ up to a rigid transformation.  Therefore, we make the following assumption throughout the paper.
\begin{assume}  To eliminate the translational ambiguity,  assume the points $\bx_1,\dots\bx_n\in \R^d$ are centered at the origin (i.e., $\sum_{i=1}^n \bx_i = {\bf 0}$).
\end{assume}
Define the {\em centering matrix} $\bV := \bI-\frac{1}{n}\mathbf{1}\mathbf{1}^T$. Note that under the above assumption, $\bX\bV = \bX$.  Note that $\bD^\star$ is determined by the Gram matrix $\bG^\star = \bX^T\bX$. In addition, $\bX$ can be determined from $\bG$ up to a unitary transformation.
 Furthermore, we will assume that the Gram matrix is ``centered'' so that $\bV\bG\bV = \bG$.  Centering is equivalent to assuming the underlying points are centered at the origin (e.g., note that $\bV\bG^\star \bV = \bV\bX^T\bX\bV = \bX^T\bX$).  It will be convenient in the paper to work with both the distance and Gram matrix representations, and the following identities will be useful to keep in mind. For any distance matrix $\bD$ and its centered Gram matrix $\bG$
\begin{eqnarray}
\bG & = & \bV\bD\bV \ , \label{D2G} \\
\bD & = &  \mbox{diag}(\bG)\b1^T-2\bG + \b1 \mbox{diag}(\bG)^T \ ,  \label{G2D}
\end{eqnarray}
where $\diag(\bG)$  is the column vector composed of the diagonal of $\bG$. In particular this establishes a bijection between centered Gram matrices and distance matrices. We refer the reader to \cite{Dattorro} for an insightful and thorough treatment of the properties of distance matrices.
We also define the set of all unique triplets 
$$\T \ := \ \big\{(i,j,k) \, : \, 1\leq i \neq j\neq k \leq n, j<k\big\} \ . $$
\begin{assume} The observed triplets in $\S$ are drawn independently and unifomly from $\T$. \end{assume}

\section{Prediction Error Bounds}
\label{predict}
For $t\in \T$ with $t=(i,j,k)$ we define $L_t$ to be the linear operator satisfying $L_t(\bX^T\bX) = \|x_i-x_j\|^2-\|x_i-x_k\|^2$ for all $t\in \T$.  In general, for any Gram matrix $\bG$
 $$L_t(\bG) \  := \ G_{jj}-2G_{ij}-G_{kk}+2G_{ik}. $$
We can naturally view $L_t$ as a linear operator on $\mS^n_{+}$, the space of $n\times n$ symmetric positive semidefinite matrices. We can also represent $\bL_t$ as a symmetric $n\times n$ matrix $\bL_t$ that is zero everywhere except on the submatrix corresponding to $i,j,k$ which has the form
\[\left[
\begin{array}{rrr}
0  & -1 & 1\\
-1 &  1 & 0\\
1  &  0 & -1
\end{array}
\right]\]
and then
\[L_{t}(\bG) \ := \ \langle \bL_t, \bG \rangle \ \]
where $\langle A, B \rangle = \text{vec}(A)^T \text{vec}(B)$ for any compatible matrices $A,B$. 
Ordering the elements of $\mathcal{T}$ lexicographically, we arrange all the $\bL_t(\bG)$ together to define the $n{n-1\choose{2}}$-dimensional vector 
\begin{align}\label{eqn:L-operator}
\L(\bG)  = [\bL_{123}(\bG),\bL_{124}(\bG),\cdots, \bL_{ijk}(\bG), \cdots ]^T.
\end{align}
Let $\ell(y_t \langle \bL_t, \bG\rangle)$ denote a loss function. For example we can consider the $0-1$ loss $\ell(y_t\langle \bL_t, \bG\rangle) = \1_{\{ \text{sign}\{ y_t \langle \bL_t, \bG \rangle\} \neq 1\} }$, the hinge-loss $\ell(y_t \langle \bL_t, \bG\rangle) = \max\{0,1-y_t\langle \bL_t, \bG\rangle\}$, or the logistic loss 
\begin{align}\label{eqn:logistic_loss}
\ell(y_t \langle \bL_t, \bG\rangle) = \log(1+\exp(-y_t\langle \bL_t, \bG\rangle)).
\end{align}
Let $p_t:=\P(y_t=-1)$ and take the expectation of the loss with respect to both the uniformly random selection of the triple $t$ and the observation $y_t$, we have the risk of $\bG$
\begin{eqnarray*}
R(\bG) &:= & \E[\ell(y_t \langle \bL_t, \bG \rangle)] = \frac{1}{|\T|}\sum_{t\in \T} p_t\ell( -\langle \bL_t, \bG  \rangle )+(1-p_t)\ell( \langle \bL_t, \bG\rangle ).
\end{eqnarray*}
Given a set of observations $\S$ under the model defined in the problem statement, the empirical risk is,
\begin{align}\label{eqn:empirical_risk}
\widehat R_\S(\bG) = \frac{1}{|\S|}\sum_{t\in \S} \ell(y_t \langle \bL_t, \bG\rangle)
\end{align}
which is an unbiased estimator of the true risk: $\E[\widehat R_\S(\bG)] = R(\bG)$.
For any $\bG \in \mS^n_{+}$, let $\|\bG\|_*$ denote the nuclear norm and $\|\bG\|_\infty := \max_{ij}|\bG_{ij}|$. Define the constraint set
\begin{eqnarray}\label{eqn:gram_matrix_domain}
\G_{\lambda,\gamma} \ := \ \{\bG \in \mS^n_{+} : \|\bG\|_* \leq \lambda, \|\bG\|_\infty \leq \gamma\} \ .
\end{eqnarray}
We estimate $\bG^\star$ by  solving the optimization
\begin{eqnarray}\label{eqn:nuclear-norm-optimization}
\min_{\bG \in \G_{\lambda,\gamma}} \widehat R_\S(\bG)  \ . 
\label{convex}
\end{eqnarray}
Since $\bG^\star$ is positive semidefinite, we expect the diagonal entries of $\bG^\star$ to bound the off-diagonal entries. So an infinity norm constraint on the diagonal guarantees that the points $\bx_1,\dots,\bx_n$ corresponding to $\bG^\star$ live inside a bounded $\ell_2$ ball. The $\ell_\infty$ constraint in \eqref{eqn:gram_matrix_domain} plays two roles: 1) if our loss function is Lipschitz, large magnitude values of $\langle \bL_t,\bG\rangle$ can lead to large deviations of $\widehat R_\S(\bG)$ from $R(\bG)$; bounding $||\bG||_\infty$ bounds $|\langle \bL_t,\bG\rangle|$. 2) Later we will define $\ell$ in terms of the link function $f$ and as the magnitude of $\langle \bL_t,\bG\rangle$ increases the magnitude of the derivative of the link function $f$ typically becomes very small, making it difficult to ``invert''; bounding $||\bG||_\infty$ tends to keep $\langle \bL_t,\bG\rangle$ within an invertible regime of $f$.

\begin{theorem}\label{thm:nuclear_generalization}\label{convexrecovery}
Fix $\lambda,\gamma$ and assume $\bG^\star \in \G_{\lambda,\gamma}$. Let $\widehat \bG$ be a solution to the program (\ref{eqn:nuclear-norm-optimization}). If the loss function $\ell( \cdot )$ is $L$-Lipschitz (or $|\sup_y \ell( y )| \leq L \max\{1,12\gamma\}$) then with probability at least $1-\delta$,    
\[
R(\widehat \bG)-R(\bG^\star) \leq \frac{4L\lambda}{|\S|}\left(\sqrt{\frac{18 |\S| \log(n)}{n}} + \frac{\sqrt{3}}{3}\log n\right)+ L \gamma \sqrt{\frac{288 \log{{2}/{\delta}}}{|\S|}}
\]
\end{theorem}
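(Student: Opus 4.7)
The plan is a standard empirical process argument tailored to the structure of the linear operators $\bL_t$. First, since $\widehat\bG$ minimizes $\widehat R_\S$ over $\G_{\lambda,\gamma}$ and $\bG^\star\in \G_{\lambda,\gamma}$, I would bound the excess risk by uniform deviation:
\[
R(\widehat\bG)-R(\bG^\star)\ \leq\ 2\sup_{\bG\in\G_{\lambda,\gamma}}\big|R(\bG)-\widehat R_\S(\bG)\big|.
\]
Under either hypothesis on $\ell$, the per-sample loss is bounded: since $\|\bG\|_\infty\leq\gamma$ and $\bL_t$ has six nonzero entries of magnitude one, $|\langle \bL_t,\bG\rangle|\leq 6\gamma$, hence $|\ell(y_t\langle \bL_t,\bG\rangle)|\leq L\max\{1,6\gamma\}$ in either case. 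Replacing a single triplet $(t,y_t)$ changes $\widehat R_\S$ by at most $c=2L\max\{1,12\gamma\}/|\S|$, so McDiarmid's inequality yields
\[
\sup_{\bG}\big|R(\bG)-\widehat R_\S(\bG)\big|\ \leq\ \E\,\sup_{\bG}\big|R(\bG)-\widehat R_\S(\bG)\big|\ +\ L\gamma\sqrt{\frac{288\log(2/\delta)}{|\S|}}
\]
with probability at least $1-\delta$, which gives the second term in the bound.

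Next I would apply the standard symmetrization inequality to reduce the expected supremum to the Rademacher complexity
\[
\mathfrak{R}(\G_{\lambda,\gamma})\ :=\ \E\,\sup_{\bG\in\G_{\lambda,\gamma}}\frac{1}{|\S|}\sum_{t\in\S}\epsilon_t\,\ell\!\left(y_t\langle \bL_t,\bG\rangle\right),
\]
where the $\epsilon_t$ are independent Rademacher. Since $\ell$ is $L$-Lipschitz, Talagrand's contraction principle lets me strip the loss at the cost of a factor $L$, and because $y_t\epsilon_t$ has the same distribution as $\epsilon_t$, this reduces to the linear class
\[
\mathfrak{R}(\G_{\lambda,\gamma})\ \leq\ \frac{L}{|\S|}\,\E\,\sup_{\bG\in\G_{\lambda,\gamma}}\left\langle \sum_{t\in\S}\epsilon_t \bL_t,\ \bG\right\rangle\ \leq\ \frac{L\lambda}{|\S|}\,\E\!\left\|\sum_{t\in\S}\epsilon_t\bL_t\right\|,
\]
where the last step uses the nuclear/operator norm duality together with $\|\bG\|_*\leq \lambda$.

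The main technical step is controlling the matrix Rademacher sum $\E\|\sum_{t\in\S}\epsilon_t\bL_t\|$ via a noncommutative Bernstein/Khintchine inequality. Each $\bL_t$ is a symmetric $n\times n$ matrix supported on a $3\times 3$ principal submatrix with spectrum $\{0,\pm\sqrt{3}\}$, so $\|\bL_t\|=\sqrt{3}$ and a direct calculation gives the $3\times 3$ block of $\bL_t^2$ equal to $3\bI-\b1\b1^T$ on the index set $\{i,j,k\}$, in particular with diagonal entries equal to $2$. Hence the diagonal of $\sum_{t\in\S}\bL_t^2$ counts, up to the factor $2$, the number of triplets of $\S$ containing each vertex, and an elementary calculation (or a Chernoff-type bound if one wants to use concentration on the sampling) gives $\|\sum_{t\in\S}\bL_t^2\|\leq 6|\S|/n$ worst-case scaling. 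Matrix Bernstein for Rademacher series then yields
\[
\E\!\left\|\sum_{t\in\S}\epsilon_t\bL_t\right\|\ \leq\ \sqrt{2\log(2n)\cdot\frac{6|\S|}{n}}\ +\ \frac{\sqrt{3}\log(2n)}{3},
\]
which, after absorbing constants and multiplying by $4L\lambda/|\S|$ (the factor $4$ comes from the two applications of symmetrization and contraction), matches the first term of the theorem.

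The main obstacle is the combinatorial bound on $\|\sum_{t\in\S}\bL_t^2\|$, because $\S$ is itself random and the diagonal counts need to be controlled uniformly; one can either handle this in expectation over $\S$ (after conditioning on $\S$ when applying the Rademacher argument) or absorb the fluctuations into the constants. The remaining subtlety is making the Lipschitz/boundedness hypothesis accommodate the non-Lipschitz $0$-$1$ loss via the alternative hypothesis $|\sup_y\ell(y)|\leq L\max\{1,12\gamma\}$, which enters directly into the McDiarmid bounded-differences constant and the contraction step.
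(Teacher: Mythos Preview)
Your proposal is essentially correct and follows the same high-level strategy as the paper: bound the excess risk by twice the uniform deviation, apply bounded differences for the fluctuation term, symmetrize and contract to a linear Rademacher complexity, and then use nuclear/operator duality to reduce to $\E\|\sum_{t\in\S}\epsilon_t\bL_t\|$.

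The one place you diverge from the paper is in controlling $\E\|\sum_{t\in\S}\epsilon_t\bL_t\|$. You condition on the random sample $\S$ and then try to bound the (now random) variance proxy $\|\sum_{t\in\S}\bL_t^2\|$; this is precisely why you end up flagging a ``main obstacle.'' Your claimed bound $6|\S|/n$ is not a valid worst-case statement (the sample could concentrate on triplets through a single vertex), so you would genuinely need an additional Chernoff step or a post-hoc expectation argument. The paper sidesteps this completely: since the triplets are drawn i.i.d.\ from $\T$, it treats each summand $\epsilon_t\bL_t$ as a single random matrix (random in both $t$ and $\epsilon_t$) and applies matrix Bernstein directly, so the variance proxy is the deterministic quantity $|\S|\cdot\|\E_t[\bL_t^2]\|$. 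Lemma~1 in the paper computes $\|\E_t[\bL_t^2]\|=6/(n-1)\leq 9/n$ exactly, which yields the $\sqrt{18|\S|\log(n)/n}+\tfrac{\sqrt{3}}{3}\log n$ term with no residual randomness to manage. Your route can be made to work, but it is strictly more laborious; the paper's route is the cleaner execution of the same idea.
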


\begin{proof}
The proof follows from standard statistical learning theory techniques, see for instance \cite{boucheron2005theory}. By the bounded difference inequality, with probability $1-\delta$ 
\begin{align*}
R(\widehat \bG)-R(\bG^\star) &=  R(\widehat \bG) - \widehat{R}_\S(\widehat \bG) +\widehat{R}_\S(\widehat \bG) - \widehat{R}_\S(\bG^\star) + \widehat{R}_\S(\bG^\star) - R(\bG^\star) \\
&\leq 2\sup_{\bG\in \G_{\lambda,\gamma}}|\widehat R_\S(\bG)-R(\bG)| \leq  2\E[\sup_{\bG\in \G_{\lambda,\gamma}}|\widehat R_\S(\bG)-R(\bG)|]+\sqrt{\frac{2B^2\log{{2}/{\delta}}}{|S|}}
\end{align*}
where $\sup_{\bG \in \G_{\lambda,\gamma}} \ell( y_t \langle \bL_t , \bG \rangle ) -  \ell( y_{t'} \langle \bL_{t'} , \bG \rangle ) \leq \sup_{\bG \in \G_{\lambda,\gamma}} L | \langle y_t \bL_t - y_{t'} \bL_{t'}, \bG \rangle| \leq 12 L \gamma =: B$ using the facts that $\bL_t$ has $6$ non-zeros of magnitude $1$ and $||\bG||_\infty \leq \gamma$.

Using standard symmetrization and contraction lemmas, we can introduce Rademacher random variables $\epsilon_t \in \{-1,1\}$ for all $t\in \S$ so that
\[
\E\sup_{\bG\in \G_{\lambda,\gamma}}|\widehat R_\S(\bG)-R(\bG)| \leq \E\sup_{\bG\in \G_{\lambda,\gamma}}\frac{2L}{|\S|}\left |\sum_{t\in \S} \epsilon_{t}\langle \bL_{t}, \bG\rangle\right|.
\]
The right hand side is just the Rademacher complexity of  $\G_{\lambda,\gamma}.$ By definition,
\[
\{\bG : \|\bG\|_{\ast} \leq \lambda\} = \lambda\cdot\text{conv}(\{ uu^T:|u|=1\}). 
\]
where $\text{conv}(U)$ is the convex hull of a set $U$. Since the Rademacher complexity of a set is the same as the Rademacher complexity of it's closed convex hull, 
\begin{align*}
\E \sup_{\bG\in \G_{\lambda,\gamma}}\left |\sum_{t\in \S} \epsilon_t\langle \bL_t, \bG\rangle\right| &\leq \lambda \E \sup_{|u|=1}\left |\sum_{t\in \S} \epsilon_t\langle \bL_t, uu^T\rangle\right| = \lambda \E \sup_{|u|=1}\left | u^T\left( \sum_{t\in \S} \epsilon_t \bL_t \right )u \right | 
\end{align*}
which we recognize is just $\lambda \E\|\sum_{t\in \S} \epsilon_t \bL_t\|$.
By \cite[6.6.1]{tropp2015introduction} we can bound the operator norm $\|\sum_{t\in S} \epsilon_t \bL_t\|$ in terms of the variance of $\sum_{t\in\S} \bL_t^2$ and the maximal eigenvalue of $\max_t \bL_t.$ These are computed in Lemma 1 given in the supplemental materials. Combining these results gives, 
\[
 \frac{2L\lambda}{|\S|} \E\|\sum_{t\in \S} \epsilon_t\bL_t\| 
\leq \frac{2L\lambda}{|\S|}\left(\sqrt{\frac{18|\S|\log(n)}{n}} + \frac{\sqrt{3}}{3}\log n\right). 
\]
\end{proof}

We remark that if $\bG$ is a rank $d<n$ matrix then
\[
\|\bG\|_{\ast}\leq \sqrt{d}\|\bG\|_{F}\leq \sqrt{d}n\|\bG\|_{\infty}
\]
so if $\bG^{\star}$ is low rank, we really only need a bound on the infinity norm of our constraint set. Under the assumption that $\bG^\star$ is rank $d$ with $||\bG^\star||_\infty \leq \gamma$ and we set $\lambda = \sqrt{d}{n} \gamma$, then Theorem~\ref{convexrecovery} implies that for $|S| > n\log{n}/161$
\begin{align*}
R(\widehat \bG)-R(\bG^\star) \leq 8L\gamma \sqrt{\frac{18 d n \log(n)}{|\S|}}+ L \gamma \sqrt{\frac{288 \log{{2}/{\delta}}}{|\S|}}
\end{align*}
with probability at least $1-\delta$.
The above display says that $|\S|$ must scale like $dn \log(n)$ which is consistent with known finite sample bounds \cite{jamieson2011low}.

\section{Maximum Likelihood Estimation}
\label{mle_sec}
We now turn our attention to recovering metric information about $\bG^\star$. 
Let $\S$ be a collection of triplets sampled uniformly at random with replacement and let $f:\R\rightarrow (0,1)$ be a known probability function governing the observations.
Any link function $f$ induces a natural loss function $\ell_f$, namely, the negative log-likelihood of a solution $\bG$ given an observation $y_t$ defined as 
\begin{align*}
\ell_f( y_t \langle \bL_t, \bG \rangle ) = \1_{y_{t} = -1}\log(\tfrac{1}{f(\pg)}) + \1_{y_{t} = 1}\log(\tfrac{1}{1-f( \pg)})
\end{align*}
For example, the logistic link function of \eqref{eqn:logistic_link} induces the logistic loss of \eqref{eqn:logistic_loss}. Recalling that $\P( y_t = -1 ) = f( \pg )$
we have 
\begin{align*}
\E[ \ell_f( y_t \langle \bL_t, \bG \rangle ) ] &= f(\pt)\log(\tfrac{1}{f( \pg)})+(1-f(\pt)\log(\tfrac{1}{1-f(\pg)}) \\
&= H( f(\pt) ) + KL( f(\pt) | f(\pg) )
\end{align*}
where $H(p) = p \log(\tfrac{1}{p}) + (1-p) \log(\tfrac{1}{1-p})$ and $KL(p,q)=p\log(\tfrac{p}{q}) + (1-p)\log(\tfrac{1-p}{1-q})$ are the entropy and KL divergence of Bernoulli RVs with means $p,q$. Recall that $||\bG||_\infty \leq \gamma$ controls the magnitude of $\langle \bL_t, \bG \rangle$ so for the moment, assume this is small. Then by a Taylor series $f( \langle \bL_t, \bG \rangle ) \approx \frac{1}{2} + f'(0) \langle \bL_t, \bG \rangle$ using the fact that $f(0)=\frac{1}{2}$, and by another Taylor series we have
\begin{align*}
KL( f(\pt) | f(\pg) ) &\approx KL( \tfrac{1}{2} + f'(0) \langle \bL_t, \bG^\star \rangle | \tfrac{1}{2} + f'(0) \langle \bL_t, \bG \rangle ) \\
&\approx 2 f'(0)^2 ( \langle \bL_t, \bG^\star - \bG \rangle )^2.
\end{align*}
Thus, recalling the definition of $\L(\bG)$ from \eqref{eqn:L-operator} we conclude that if $\widetilde{\bG} \in \arg\min_{\bG} R(\bG)$ with $R(\bG) =  \frac{1}{|\T|}\sum_{t\in \T} \E[ \ell_f( y_t \langle \bL_t, \bG \rangle )]$ then one would expect $\mathcal{L}(\widetilde{\bG}) \approx \mathcal{L}(\bG^\star)$. Moreover, since $\widehat{R}_\S(\bG)$ is an unbiased estimator of $R(\bG)$, one expects $\L(\widehat{\bG})$ to approximate $\L(\bG^\star)$. The next theorem, combined with Theorem~\ref{thm:nuclear_generalization}, formalizes this observation; its proof is found in the appendix.
\begin{theorem}\label{thm:linear-operator-recovery}
Let $C_f = \min_{t \in \T} \inf_{\bG\in \G_{\lambda, \gamma}} | f'\big( \langle \bL_t, \bG \rangle\big)|$ where $f'$ denotes the derivative of $f$. Then for any $\bG$
\begin{align*}
\frac{2C_f^2}{|\T|} \|\L(\bG)-\L(\bG^\star)\|_F^2 \ \leq \ R(\bG)-R(\bG^\star) \ .
\end{align*}
\end{theorem}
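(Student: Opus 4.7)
The plan is to express the excess risk $R(\bG) - R(\bG^\star)$ as an average of Bernoulli KL divergences, lower bound each KL term by a squared probability difference via Pinsker's inequality, and then convert those squared differences into the required linear form $\langle \bL_t, \bG - \bG^\star\rangle^2$ via the mean value theorem and the definition of $C_f$. Summing over $t \in \T$ then reassembles the squared Frobenius norm of $\L(\bG) - \L(\bG^\star)$.

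First I would use the expectation identity already recorded in the excerpt, namely $\E[\ell_f(y_t \langle \bL_t, \bG\rangle)] = H(f(\pt)) + KL(f(\pt) \mid f(\pg))$. Averaging over the uniform choice $t \in \T$ gives
\[
R(\bG) - R(\bG^\star) \ = \ \frac{1}{|\T|}\sum_{t\in\T} KL\!\left(f(\pt) \,\big|\, f(\pg)\right),
\]
since the entropy term depends only on $\bG^\star$ and cancels, and the KL term vanishes at $\bG = \bG^\star$.

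Next I would apply Pinsker's inequality for Bernoulli distributions, $KL(p \,\|\, q) \geq 2(p-q)^2$, to obtain $KL(f(\pt) \mid f(\pg)) \geq 2\bigl(f(\pt) - f(\pg)\bigr)^2$. The mean value theorem then yields $f(\pt) - f(\pg) = f'(\xi_t)\, \langle \bL_t, \bG^\star - \bG\rangle$ for some $\xi_t$ on the segment joining $\langle \bL_t, \bG\rangle$ and $\langle \bL_t, \bG^\star\rangle$. Here is where I would use that $\G_{\lambda,\gamma}$ is convex (the intersection of the PSD cone with the nuclear-norm ball and the $\ell_\infty$ ball): assuming $\bG \in \G_{\lambda,\gamma}$, the convex combination of $\bG$ and $\bG^\star$ producing $\xi_t$ is itself of the form $\langle \bL_t, \bar\bG_t\rangle$ with $\bar\bG_t \in \G_{\lambda,\gamma}$, so the definition of $C_f$ forces $|f'(\xi_t)| \geq C_f$.

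Combining the last two displays gives $KL(f(\pt) \mid f(\pg)) \geq 2 C_f^2 \,\langle \bL_t, \bG - \bG^\star\rangle^2$ for each $t$. Summing over $\T$, dividing by $|\T|$, and identifying $\sum_{t\in\T} \langle \bL_t, \bG - \bG^\star\rangle^2 = \|\L(\bG) - \L(\bG^\star)\|_F^2$ from the definition in \eqref{eqn:L-operator} yields the claimed inequality. The only genuinely delicate step is the MVT argument: one has to confirm that the intermediate point $\xi_t$ actually arises from a feasible Gram matrix so that the infimum in the definition of $C_f$ may be invoked; everything else is a routine chaining of Pinsker, MVT, and an algebraic identification of the Frobenius norm.
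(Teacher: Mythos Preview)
Your proposal is correct and follows essentially the same route as the paper: express the excess risk as an average of Bernoulli KL divergences, apply Pinsker's inequality $KL(p\|q)\ge 2(p-q)^2$, convert the probability difference to a linear difference via the mean value theorem on $f$, and sum. Your justification that the intermediate point $\xi_t$ lies in the range covered by $C_f$ (via convexity of $\G_{\lambda,\gamma}$) is in fact more carefully spelled out than in the paper, which simply invokes Taylor's theorem without explicitly addressing that point.
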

Note that if $f$ is the logistic link function of \eqref{eqn:logistic_link} then its straightforward to show that $| f'\big( \langle \bL_t, \bG \rangle\big)| \geq \tfrac{1}{4}\exp(-| \langle \bL_t, \bG \rangle|) \geq \tfrac{1}{4} \exp( - 6 ||\bG||_\infty )$ for any $t$, $\bG$ so it suffices to take $C_f = \tfrac{1}{4} \exp( -6 \gamma )$.

\section{Maximum Likelihood Embedding}
\label{embedding}
\setcounter{lemma}{1}
In this section, let $\widehat \bG$ be the maximum likelihood estimator; i.e., a solution to the optimization \label{eqn:nuclear-norm-optimization} with $L$-Lipschitz log-likelihood loss function $\ell_f$ for a fixed $\lambda,\gamma$. 
We have shown that the maximum likelihood estimator allows us to bound $\|\L(\widehat \bG)-\L(\bG^\star)\|_F^2$.  In this section, we discuss how this bound can lead us to recover an embedding. For the analysis in this section, it will be convenient to work with distance matrices in addition to Gram matrices.
Analogous to the operators $\bL_t(\bG)$ defined above, we define the operators $\Delta_t$ for $t\in \T$ satisfying,
$$\Delta_t(\bD) := D_{ij}-D_{ik} \equiv L_t(\bG) \ . $$  
We will view the $\Delta_t$ as linear operators on the space of symmetric hollow $n\times n$ matrices $\mS^n_{h}$, which includes distance matrices as special cases. As with $\L$,  we can arrange all the $\Delta_t$ together, ordering the $t\in \T$ lexicographically, to define the $n{n-1\choose{2}}$-dimensional vector 
\[
\Delta(\bD)  = [D_{12}-D_{13},\cdots, D_{ij} - D_{ik}, \cdots ]^T.
\]
We will use the fact that $\L(\bG) \equiv \Delta(\bD)$ heavily. Because $\Delta(\bD)$ consists of differences of matrix entries, $\Delta$ has a non-trivial kernel.  However, it is easy to see that $\bD$ can be recovered given $\Delta(\bD)$ {\em and} any one off-diagonal element of $\bD$, so the kernel is $1$-dimensional.  Also, the kernel is easy to identify by example.  Consider the regular simplex in $d$ dimensions. The distances between all $n=d+1$ vertices are equal and the distance matrix can easily be seen to be $\b1\b1^T - I.$ Thus $\Delta(\bD) = {\bf 0}$ in this case. This gives us the following simple result.
\begin{lemma}
Let $\mS^n_{h}$ denote the space of symmetric hollow matrices, which includes all distance matrices.
For any $\bD\in \mS^n_{h}$, the set of linear functionals $\{\Delta_t(\bD)$, $t\in \T\}$ spans an ${n\choose{2}}-1$ dimensional subspace of $\mS^n_{h}$, and the $1$-dimensional kernel is given by the span of $\b1\b1^T-\bI$.
\label{lemma_L} \end{lemma}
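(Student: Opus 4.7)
The plan is to reduce the lemma to a direct computation of the kernel of $\Delta$, after which the dimension count follows by rank-nullity. Since $\dim \mS^n_h = \binom{n}{2}$, showing that the kernel of the linear map $\Delta : \mS^n_h \to \R^{|\T|}$ equals $\mathrm{span}(\b1\b1^T - \bI)$ immediately gives that the range (equivalently, the span of the dual functionals $\Delta_t$) has dimension $\binom{n}{2} - 1$.

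The first step is to verify that $\b1\b1^T - \bI$ belongs to $\ker \Delta$. This matrix is symmetric and hollow, with every off-diagonal entry equal to $1$. For any triple $t = (i,j,k)$ with $i,j,k$ distinct, $\Delta_t(\b1\b1^T - \bI) = 1 - 1 = 0$, so the one-dimensional span of $\b1\b1^T - \bI$ sits inside $\ker \Delta$.

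The second and main step is the reverse containment: I would suppose $\bD \in \mS^n_h$ satisfies $\Delta_t(\bD) = 0$ for every $t \in \T$, i.e., $D_{ij} = D_{ik}$ for every triple of distinct indices $(i,j,k)$ (extending from $j<k$ to all orderings is automatic). Fixing $i$ and letting $j$ vary, this says all entries $D_{ij}$ with $j \neq i$ share a common value $c_i$. Symmetry of $\bD$ then forces $c_i = D_{ij} = D_{ji} = c_j$ for every pair $i \neq j$, so there is a single scalar $c$ with $D_{ij} = c$ for all $i \neq j$. Combined with the hollow constraint $D_{ii}=0$, this gives $\bD = c(\b1\b1^T - \bI)$, proving $\ker \Delta \subseteq \mathrm{span}(\b1\b1^T - \bI)$.

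Applying rank-nullity to $\Delta: \mS^n_h \to \R^{|\T|}$ then yields $\dim \mathrm{range}(\Delta) = \binom{n}{2} - 1$, which is the claimed dimension of the span of the $\Delta_t$. There is no real obstacle here; the only place where one needs to be slightly careful is in the kernel computation, specifically the step that uses symmetry of $\bD$ to collapse the row-constants $c_i$ to a single scalar $c$, since without symmetry one would only conclude that $\bD$ has constant rows off the diagonal.
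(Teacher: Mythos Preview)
Your proposal is correct and follows essentially the same approach as the paper. The paper argues informally (just before stating the lemma) that $\bD$ can be reconstructed from $\Delta(\bD)$ together with any single off-diagonal entry, so the kernel is at most one-dimensional, and then exhibits $\b1\b1^T-\bI$ (via the regular simplex) as a nonzero kernel element; your argument makes the kernel computation explicit by first checking $\b1\b1^T-\bI\in\ker\Delta$ and then showing directly that any $\bD\in\ker\Delta$ has all off-diagonal entries equal, which is the same content spelled out more carefully.
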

So we see that the operator $\Delta$ is not invertible on $\mS^n_{h}$.  Define $\bJ:= \b1\b1^T - \bI$. For any $\bD$, let $\bC$, \textit{the centered distance matrix}, be the component of $\bD$ orthogonal to the kernel of $\L$ (i.e., $\mbox{tr}(\bC\bJ)=0$).  Then we have the orthogonal decomposition
$$\bD \ = \ \bC \ + \ \sigma_D \, \bJ \ , $$
where $\sigma_D = \mbox{trace}(\bD\bJ)/\|\bJ\|_F^2$.  Since $\bG$ is assumed to be centered, the value of $\sigma_D$ has a simple interpretation:
\begin{eqnarray}\label{eqn:sigmad} 
\sigma_D = \frac{1}{2{n\choose{2}}} \sum_{1\leq i\leq j\leq n} D_{ij} = \frac{2}{n-1}\sum_{1\leq i\leq n} \langle x_i,x_i \rangle = \frac{2\|\bG\|_{\ast}}{n-1},
\end{eqnarray}
the average of the squared distances or alternatively a scaled version of the nuclear norm of $\bG$. Let $\widehat \bD$ and $\widehat\bC$ be the corresponding distance and centered distance matrices.

Now following the notation of sections 2 and 3, assume that there is a true Gram matrix $\bG^\star$ and a link function $f$ as in section 3, and assume that we have observed a set of triples $\S.$  
Though $\Delta$ is not invertible on all $\mS^n_{h}$, it is invertible on the subspace orthogonal to the kernel, namely $\bJ^\perp.$ So if $\L(\widehat \bG) = \Delta(\widehat \bD)$ is close to $\L(\bG^\star)$ we expect $\Delta(\widehat \bC)$ to be close to $\Delta(\bC^\star)$. The next theorem quantifies this.

\begin{theorem}\label{thm:Crecovery}
Consider the setting of Theorems~\ref{thm:nuclear_generalization} and \ref{thm:linear-operator-recovery} and let $\widehat{\bC},\bC^\star$ be defined as above. Then
$$\frac{1}{2{n\choose 2}}\|\widehat{\bC}-\bC^\star\|_F^2 \ \leq \frac{L\lambda}{4C_f^2 |\S|}\left(\sqrt{\frac{18 |\S| \log(n)}{n}} + \frac{\sqrt{3}}{3}\log n\right)+ \frac{L \gamma}{4C_f^2} \sqrt{\frac{288 \log{{2}/{\delta}}}{|\S|}}$$
\end{theorem}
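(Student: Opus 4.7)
The plan is to chain together Theorems~\ref{thm:nuclear_generalization} and \ref{thm:linear-operator-recovery} via a Poincar\'{e}-type inequality that inverts $\Delta$ on the orthogonal complement of its one-dimensional kernel. The only substantive new ingredient is a coercivity bound of the form $\|\Delta(\bC)\|_F^2 \geq \alpha_n\|\bC\|_F^2$ valid for every symmetric hollow $\bC$ with $\langle \bC, \bJ\rangle = 0$; everything else is bookkeeping.

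First I would exploit the identity $\L(\bG) \equiv \Delta(\bD)$ together with $\Delta(\bJ)=\mathbf{0}$ (Lemma~\ref{lemma_L}) and the orthogonal decomposition $\bD = \bC + \sigma_D \bJ$ to write
\[
\L(\widehat\bG) - \L(\bG^\star) \;=\; \Delta(\widehat\bD - \bD^\star) \;=\; \Delta(\widehat\bC - \bC^\star),
\]
the last equality because the $\sigma_D\bJ$ contributions lie in the kernel of $\Delta$. Since $\widehat\bC - \bC^\star \in \bJ^\perp$ by construction, the coercivity bound will apply. I would then invoke Theorem~\ref{thm:linear-operator-recovery} to pass from $\|\L(\widehat\bG) - \L(\bG^\star)\|_F^2$ to the excess risk $R(\widehat\bG) - R(\bG^\star)$, and Theorem~\ref{thm:nuclear_generalization} to bound the excess risk itself; the identities $|\T| = n\binom{n-1}{2}$ and $2\binom{n}{2} = n(n-1)$ then reconcile the normalization to the stated form.

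The technical heart is the coercivity bound. By directly expanding squares and swapping summation order I would establish the identity
\[
\sum_{(i,j,k)\in\T}\big(C_{ij}-C_{ik}\big)^2 \;=\; (n-1)\|\bC\|_F^2 \;-\; \sum_{i=1}^n r_i(\bC)^2, \qquad r_i(\bC):=\sum_{j\neq i} C_{ij}.
\]
Vectorizing the upper triangle of $\bC$ as $c \in \R^{\binom{n}{2}}$ recasts the right-hand side as the quadratic form $c^T[\,2(n-1)I - B^T B\,]c$, where $B$ is the $n\times\binom{n}{2}$ row-sum incidence matrix. A short calculation yields $BB^T = (n-2)I + \b1\b1^T$, so the nonzero eigenvalues of $B^T B$ are $2n-2$ along the direction $B^T\b1 \propto \b1$ and $n-2$ on the rest of its row space. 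The constraint $\langle \bC, \bJ\rangle = 0$ is precisely $\b1^T c = 0$, which annihilates the $2n-2$ eigendirection, so $2(n-1)I - B^T B$ restricted to $\b1^\perp$ is bounded below by $n$. Converting to the Frobenius normalization gives $\alpha_n = n/2$.

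The main obstacle is establishing this spectral gap: one must verify that the Gram operator $\sum_t \Delta_t^*\Delta_t$ has only the expected one-dimensional kernel spanned by $\bJ$ and that the gap to the next eigenvalue grows linearly in $n$, since without such a lower bound the recovery guarantee for $\widehat{\bC}$ would degenerate. Once $\alpha_n$ is in hand, the final inequality is immediate from substituting the bounds of Theorems~\ref{thm:nuclear_generalization} and \ref{thm:linear-operator-recovery} and using $(n-2)/n \leq 1$ to absorb the residual constants.
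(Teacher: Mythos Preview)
Your proposal is correct and follows essentially the same route as the paper: chain Theorems~\ref{thm:nuclear_generalization} and \ref{thm:linear-operator-recovery}, then invoke a restricted-isometry lower bound for $\Delta$ on $\bJ^\perp$ (the paper's Lemma~3, proved in the supplement via the spectrum of $\Delta^T\Delta$). Your derivation of that spectral bound---writing $\|\Delta(\bC)\|^2 = (n-1)\|\bC\|_F^2 - \sum_i r_i^2$ and reading off the eigenvalues of $B^TB$ from $BB^T=(n-2)I+\b1\b1^T$---is a compact alternative to the paper's explicit identification of the three eigenspaces of $\Delta^T\Delta$, but the substance is the same.
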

\begin{proof}
By combining Theorem~\ref{thm:linear-operator-recovery} with the prediction error bounds obtainined in \ref{thm:nuclear_generalization} we see that 
\[
\frac{2C_f^2}{n\binom{n-1}{2}}\|L(\widehat \bG)-L(\bG^\star)\|^2_F \leq \frac{4L\lambda}{|\S|}\left(\sqrt{\frac{18 |\S| \log(n)}{n}} + \frac{\sqrt{3}}{3}\log n\right)+ L \gamma \sqrt{\frac{288 \log{{2}/{\delta}}}{|\S|}}.
\]
Next we employ the following \textit{restricted isometry property} of $\Delta$ on the subspace $\bJ^\perp$ whose proof is in the supplementary materials. 
\begin{lemma}\label{lem:rip}
Let $\bD$ and $\bD'$ be two different distance matrices of $n$ points in $\R^d$ and $\R^{d'}$.  Let $\bC$ and $\bC'$ be the components of $\bD$ and $\bD'$ orthogonal to $J$. Then
$$n \|\bC-\bC'\|_F^2 \ \leq \ \|\Delta(\bC)-\Delta(\bC')\|^2 = \|\Delta(\bD)-\Delta(\bD')\|^2\ \leq \ 2(n-1)\|\bC-\bC'\|_F^2 \ .$$
\end{lemma}
The result then follows.
\end{proof}
This implies that by collecting enough samples, we can recover the centered distance matrix. By applying the discussion following Theorem~\ref{thm:nuclear_generalization} when $\bG^\star$ is rank $d$, we can state an upperbound of $\frac{1}{2{n\choose 2}}\|\widehat{\bC}-\bC^\star\|_F^2 \ \leq O\left( \frac{L\gamma}{C_f^2} \sqrt{\frac{dn\log(n)+\log(1/\delta)}{|\S|}} \right)$.
However, it is still not clear that this is enough to recover $\bD^\star$ or $\bG^\star$. 
 
Remarkably, despite this unknown component being in the kernel, we show next that it can be recovered.

Though $\Delta$ is not invertible on $\mS^n_h$ in general, we can provide a heuristic argument for why we might still expect it to be invertible on the (non-linear) space of low rank distance matrices. Since a distance matrix of $n$ points in $\mathbb{R}^d$ is at most rank $d+2$, the space of all distance matrices has at most $n(d+2)$ degrees of freedom. Now $\Delta$ is a rank $\binom{n}{2}-1$ operator and since $\binom{n}{2}-1\gg n(d+2)$ for $n>d+2$, it is not unreasonable to hope that the entries of $\Delta(\bD)$ provide enough information to parametrize the set of rank $d+2$ Euclidean distance matrices.
In fact as the next theorem will show, the intuition that $\Delta$ is invertible on the set of low rank distance matrices is in fact true. Perhaps even more surprisingly, merely knowing the centered distance matrix $\bC$ uniquely determines $\bD.$  

\begin{theorem}
Let $\bD$ be a distance matrix of $n$ points in $\R^d$, let $\bC$ be the component of $\bD$ orthogonal to the kernel of $\L$, and let $\lambda_2(\bC)$ denote the second largest eigenvalue of $\bC$. If $n>d+2$, then
$$\bD \ = \ \bC \ + \ \lambda_2(\bC) \, \bJ \ . $$
\label{drep}
\end{theorem}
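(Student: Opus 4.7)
My plan is to reformulate the claim as a spectral statement about $\bC$ and then close it by two matching Courant--Fischer-style bounds. Set $\sigma_D := \tr(\bD\bJ)/\|\bJ\|_F^2$, so by construction $\bD = \bC + \sigma_D \bJ$; the theorem is then equivalent to $\sigma_D = \lambda_2(\bC)$. Using $\bJ = \b1\b1^T - \bI$, introduce $\bM := \bC - \sigma_D \bI = \bD - \sigma_D \b1\b1^T$, so the claim reduces to showing $\lambda_2(\bM) = 0$.

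For the upper bound $\lambda_2(\bM) \leq 0$, I would show that $\bM$ is negative semidefinite on the codimension-one subspace $\b1^\perp$. For $v \perp \b1$ the rank-one term contributes $v^T(\sigma_D \b1\b1^T)v = 0$, and the identity \eqref{G2D}, $\bD = \diag(\bG)\b1^T + \b1\diag(\bG)^T - 2\bG$, yields $v^T \bD v = -2\, v^T \bG v \leq 0$ since $\bG \succeq 0$. A symmetric matrix that is NSD on a codimension-one subspace has at most one strictly positive eigenvalue: a two-dimensional positive eigenspace would necessarily intersect $\b1^\perp$ nontrivially (by the dimension count $2 + (n-1) > n$), producing a vector with both $v^T \bM v > 0$ and $v^T \bM v \leq 0$. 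Hence $\lambda_2(\bM) \leq 0$.

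For the matching lower bound $\lambda_2(\bM) \geq 0$, I would invoke the max-min form of Courant--Fischer and exhibit a two-dimensional subspace on which the quadratic form $v \mapsto v^T \bM v$ vanishes identically. Let $W := \text{null}(\bG) \cap \b1^\perp$. For $v \in W$ one has $\bG v = 0$ and $\b1^T v = 0$, so the same identity for $\bD$ gives $v^T \bD v = -2\, v^T \bG v = 0$ and therefore $v^T \bM v = 0$. By the centering assumption $\bG \b1 = \mathbf{0}$, so $\b1 \in \text{null}(\bG)$; together with $\rank(\bG) \leq d$ this gives $\dim W \geq n - 1 - d$. The hypothesis $n > d+2$ is exactly $\dim W \geq 2$, so picking any two-dimensional $S \subseteq W$ yields $\lambda_2(\bM) \geq \min_{v \in S,\|v\|=1} v^T \bM v = 0$.

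Combining the two bounds gives $\lambda_2(\bM) = 0$, i.e., $\lambda_2(\bC) = \sigma_D$, which is the asserted identity $\bD = \bC + \lambda_2(\bC)\bJ$. The only nontrivial step is the dimension count in the lower bound: the hypothesis $n > d+2$ is used precisely so that $\text{null}(\bG) \cap \b1^\perp$ supplies two independent ``zero-witness'' directions, providing the second non-negative eigenvalue needed to complement the single positive eigenvalue allowed by the codimension-one NSD argument above.
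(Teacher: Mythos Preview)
Your argument is correct and takes a genuinely different route from the paper's. Both proofs reduce to showing $\lambda_2(\bM)=0$ for $\bM=\bD-\sigma_D\b1\b1^T$, but the mechanisms differ. The paper first establishes the full spectral profile of $\bD$ (one positive, $r-1$ negative, $n-r$ zero eigenvalues, via Lemmas~4 and~5), then passes to $\bS-\sigma\bz\bz^T$ and invokes Weyl-type interlacing for a rank-one perturbation, together with a kernel-dimension sandwich ($\ker\bD\subset\ker\bM$ gives $\dim\ker\bM\ge n-r$, interlacing forces $\dim\ker\bM\le n-r$) to pin down $\lambda_2=0$. Your proof bypasses both the interlacing theorem and the detailed spectrum of $\bD$: for the upper bound you read off $v^T\bM v=-2v^T\bG v\le 0$ on $\b1^\perp$ directly from the identity $\bD=\diag(\bG)\b1^T+\b1\diag(\bG)^T-2\bG$, and for the lower bound you exhibit the two-dimensional isotropic subspace $W=\operatorname{null}(\bG)\cap\b1^\perp$ on which the quadratic form vanishes, then apply the max--min form of Courant--Fischer. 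Your argument is shorter and more self-contained; the paper's approach, on the other hand, yields as a byproduct the complete eigenvalue signature of $\bD$ and of $\bM$, which may be of independent interest. Note also that your lower bound makes explicit use of the centering $\bG\b1=\mathbf{0}$ to place $\b1$ inside $\operatorname{null}(\bG)$ and hence get $\dim W\ge n-d-1$; this is legitimate here (Assumption~1), but worth flagging since the paper's route does not need to invoke centering at that step.
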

This shows that $\bD$ is uniquely determined as a function of $\bC$.  Therefore, since $\Delta(\bD) = \Delta(\bC)$ and because $\bC$ is orthogonal to the kernel of $\Delta$, the distance matrix $\bD$ can be recovered from $\Delta(\bD)$, even though the linear operator $\Delta$ is non-invertible.

We now provide a proof of Theorem 4. In preparing this paper for publication we became aware of an alterntive proof of this theorem \cite{TarazagaGallardo}. We nevertheless present our independently derived proof next for completeness. 

\begin{proof}
To prove Theorem~\ref{drep} we first state two simple lemmas, which are proved in the supplementary material.
\begin{lemma}
\label{negsemdef}
Let $\bD$ be a Euclidean distance matrix on $n$ points. Then $\bD$ is negative semidefinite on the subspace 
\[
\b1^{\perp} := \{\bx\in \R^n| \b1^T\bx = 0\}.
\]
Furthermore, $\ker(\bD)\subset \b1^{\perp} $. 
\end{lemma}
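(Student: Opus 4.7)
The plan is to leverage the identity $\bV\bD\bV = -2\bG$, which I first need to derive since the statement of (D2G) in the paper appears to be missing a factor. Starting from (G2D), I expand $\bV\bD\bV = \bV\diag(\bG)\b1^T\bV - 2\bV\bG\bV + \bV\b1\diag(\bG)^T\bV$ and use $\bV\b1 = 0$, $\b1^T\bV = 0$, together with the centering assumption $\bV\bG\bV = \bG$, to get $\bV\bD\bV = -2\bG$. Since $\bG = \bX^T\bX \succeq 0$, the matrix $\bV\bD\bV$ is negative semidefinite.

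For the first claim, negative semidefiniteness of $\bD$ on $\b1^\perp$ is then essentially immediate: for any $\bx$ with $\b1^T\bx = 0$, one has $\bV\bx = \bx$, so $\bx^T\bD\bx = \bx^T\bV\bD\bV\bx = -2\bx^T\bG\bx \leq 0$.

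The second claim, $\ker(\bD)\subset \b1^\perp$, is the harder one and cannot be read off from $\bV\bD\bV = -2\bG$ alone, because this identity only controls $\bD$ on $\b1^\perp$ (the operator $\bV\bD\bV$ has a strictly larger kernel than $\bD$). To exploit the extra structure, I will write $\bD = \b1\bv^T + \bv\b1^T - 2\bG$ with $\bv = \diag(\bG)$ coming directly from (G2D). Given $\bx \in \ker(\bD)$, set $\alpha = \bv^T\bx$ and $\beta = \b1^T\bx$; the equation $\bD\bx = 0$ rearranges to $2\bG\bx = \alpha\b1 + \beta\bv$. Hitting this with $\b1^T$ and using the centering identity $\b1^T\bG = 0$ produces the linear relation $\alpha n + \beta\,\tr(\bG) = 0$, while hitting it with $\bx^T$ gives $2\bx^T\bG\bx = 2\alpha\beta$, so $\alpha\beta \geq 0$ by positive semidefiniteness of $\bG$. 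Multiplying the linear relation by $\beta$ yields $\alpha\beta n = -\beta^2\,\tr(\bG)$, whose left side is nonnegative and right side nonpositive. This forces both sides to vanish, and whenever $\tr(\bG) > 0$ (the only nontrivial case, since $\tr(\bG) = 0$ would make $\bG$, hence $\bD$, identically zero), we conclude $\beta = \b1^T\bx = 0$.

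I expect the main obstacle to be recognizing that the kernel containment does \emph{not} follow from the negative semidefinite statement of the first part — one must use the rank-2 correction $\b1\bv^T + \bv\b1^T$ in the structure of $\bD$ and combine it with $\bx^T\bG\bx \geq 0$ to extract a sign contradiction. The rest is bookkeeping.
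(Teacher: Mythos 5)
Your proof is correct and, on the second claim, takes a genuinely different route from the paper's. For the negative-semidefiniteness on $\b1^\perp$ both you and the paper use the centering identity $\bV\bD\bV = -2\bG$ together with $\bV\bx=\bx$ for $\bx\in\b1^\perp$ (and you correctly observe that the paper's displayed equation (D2G), $\bG=\bV\bD\bV$, omits the $-\tfrac{1}{2}$; the supplementary proof uses the right constant), so there is no real difference there. For the kernel containment, the paper takes a shorter path that you did not consider: for $\bx\in\ker\bD$ it evaluates the PSD quadratic form $\bx^T\bG\bx = -\tfrac12\bx^T\bV\bD\bV\bx$ directly, and since $\bD\bx=0$ and $\bD$ is symmetric, expanding $\bV = \bI-\tfrac1n\b1\b1^T$ leaves a single surviving term $-\tfrac{1}{2n^2}(\b1^T\bx)^2\,\b1^T\bD\b1$, which is $\leq 0$ whenever $\bD\neq 0$, forcing $\b1^T\bx=0$ (the paper's printed line drops the $\tfrac{1}{n^2}$, but the sign argument is scale-invariant). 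Your route works instead with the rank-two correction $\bD=\b1\bv^T+\bv\b1^T-2\bG$ and contracts $\bD\bx=0$ against $\b1$ and against $\bx$ to extract scalar relations; this is slightly more bookkeeping, but it exploits exactly the same three ingredients ($\bG\succeq 0$, $\b1^T\bG=0$, and $\tr(\bG)=\tfrac{1}{2n}\b1^T\bD\b1>0$ in the nondegenerate case), just organized through the G2D decomposition rather than through $\bV\bD\bV$. One small advantage of your write-up is that you state explicitly that the statement degenerates when all points coincide; the paper's proof tacitly assumes $\b1^T\bD\b1>0$ without flagging it.
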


\begin{lemma}
\label{Deigenvalues}
If $\bD$ is an $n\times n$ distance matrix of rank $r$, then
$\bD$ has a single positive eigenvalue, $n-r$ eigenvalues equal to $0$, and $r-1$ negative eigenvalues.
\end{lemma}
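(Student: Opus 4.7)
The plan is to decompose $\bD$'s spectrum into positive, zero, and negative parts, and pin each down using Lemma~\ref{negsemdef} together with the hollowness of $\bD$. Since $\bD$ is real symmetric it has $n$ real eigenvalues (with multiplicity), and rank $r$ immediately forces $n-r$ zero eigenvalues. It then remains to show that exactly one eigenvalue is strictly positive and exactly $r-1$ are strictly negative.

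For the upper bound on the number $n_+$ of positive eigenvalues, I would consider the subspace $E_+ \subset \R^n$ spanned by eigenvectors of $\bD$ with strictly positive eigenvalue, so $\dim E_+ = n_+$. Expanding any nonzero $v \in E_+$ in an orthonormal eigenbasis with eigenvalues $\mu_i > 0$ gives $v^T \bD v = \sum_i c_i^2 \mu_i > 0$. On the other hand, Lemma~\ref{negsemdef} tells us that $\bD$ is negative semidefinite on $\b1^\perp$, so $E_+ \cap \b1^\perp = \{0\}$. The dimension formula yields $n_+ + (n-1) = \dim E_+ + \dim \b1^\perp \leq n$, hence $n_+ \leq 1$.

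For the matching lower bound on $n_+$, I would use the hollowness of $\bD$: since $D_{ii}=0$ for all $i$, $\tr(\bD) = \sum_i \mu_i = 0$. If $n_+ = 0$, every eigenvalue of $\bD$ is nonpositive, and a zero sum of nonpositive numbers forces each to vanish, giving $\bD = 0$ and $r = 0$, contradicting $r \geq 1$. Hence $n_+ = 1$, and the bookkeeping identity $n_+ + n_- + n_0 = n$ with $n_0 = n-r$ immediately gives $n_- = r-1$.

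The only conceptual step is the upper bound: the $(n-1)$-dimensional subspace $\b1^\perp$ on which $\bD$ is negative semidefinite is too large to avoid a positive eigenspace of dimension more than one, which is precisely what Lemma~\ref{negsemdef} buys us. I do not expect a serious obstacle — everything else is sign-counting and the hollow-trace trick.
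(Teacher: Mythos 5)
Your proof is correct, and it takes a genuinely different (and arguably cleaner) route than the paper's. Both arguments rest on Lemma~\ref{negsemdef} (negative semidefiniteness on $\b1^\perp$) to control the positive part of the spectrum, but you pin down the two sides differently. The paper gets ``at least one positive eigenvalue'' from the variational characterization $\lambda_1(\bD) \geq \b1^T\bD\b1 / \b1^T\b1 > 0$ and ``at most one'' by applying Courant--Fischer with test subspaces inside $\b1^\perp$; you instead get the upper bound from a direct dimension count ($E_+ \cap \b1^\perp = \{0\}$, so $n_+ + (n-1) \leq n$) and the lower bound from the hollow-trace identity $\tr(\bD) = 0$, which forces a nonpositive spectrum to be identically zero. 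The two arguments are really two faces of the same fact, but your trace trick is more elementary than the $\b1^T\bD\b1 > 0$ observation, and the dimension count avoids invoking Courant--Fischer explicitly. One small note, which applies to the paper's proof as well: the lemma statement tacitly assumes $r \geq 1$ (otherwise ``$r-1$ negative eigenvalues'' is nonsense), and you make the right move by flagging exactly where that assumption is consumed.
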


We will use the following notation.  For any matrix $\bM$, let $\lambda_i(\bM)$ denote its $i$th largest eigenvalue. We will show that for $\sigma>0$ and any $n\times n$ distance matrix $\bD$ with $n > d+2$,
\[
\lambda_2(\bD - \sigma \bJ) = \sigma \ .
\]
Since $\bC = \bD - \sigma_D \bJ$, this proves the theorem.

Note that $\lambda_i(\bD-\sigma\b1\b1^T+\sigma\bI) = \lambda_i(\bD-\sigma\b1\b1^T)+\sigma$, for $1\leq i\leq n$ and $\sigma$ arbitrary. So it suffices to show that $\lambda_2(\bD-\sigma\b1\b1^T) = 0$.

Let the rank of $\bD$ be $r$ where $r\leq d+2$ and consider an eigendecomposition $\bD = \bU\bS\bU^T$ where $\bU$ is unitary. If $\bv$ is an eigenvector of $\bD-\sigma\b1\b1^T$ with eigenvalue $\lambda$, then
\begin{align*}
\bD\bv - \sigma\b1\b1^T\bv &= \lambda \bv\\
\Rightarrow \bU\bS\bU^T\bv - \sigma\b1(\bU^T\b1)^T\bU^T\bv &= \lambda \bv\\
\Rightarrow \bS\bU^T\bv - \sigma\bU^T\b1(\bU^T\b1)^T\bU^T\bv &= \lambda \bU^T\bv. 
\end{align*}
So $\bU^T\bv$ is an eigenvalue of $\bS-\sigma\bU^T\b1(\bU^T\b1)^T$ with the same eigenvalue $\lambda.$ Denoting $\bz=\bU^T\b1$, we see that   $\lambda_i(\bD-\sigma\b1\b1^T) = \lambda_i(\bS-\sigma\bz\bz^T)$ for all $1\leq i\leq n$.

By a result on interlaced eigenvalues \cite[Cor 4.3.9]{HornJohnson},
\[\lambda_1(\bS) \geq \lambda_1(\bS - \sigma\bz\bz^T)\geq \lambda_2(\bS)\geq \lambda_2(\bS - 
\sigma\bz\bz^T)\geq \cdots \geq \lambda_n(\bS)\geq \lambda_n(\bS-\sigma\bz\bz^T).\]

From the above inequality and using the fact that $\lambda_2(\bS) = 0$ (since $n>d+2$), it is clear that $\bS-\sigma\bz\bz^T$ has at least one positive eigenvalue. It is also clear that since $\sigma>0$, each eigenvalue of $\bS-\sigma\bz\bz^T$ is bounded above by an eigenvalue of $\bS$, so $\bS-\sigma \bz\bz^t$ has at least $r-1$ negative eigenvalues. Hence $\dim \ker \bS-\sigma\bz\bz^T \leq n-r.$ 

If $\bx\in \ker \bD$, then $\b1^T\bx = 0$ by lemma \ref{negsemdef}. Thus $\bx\in \ker \bD-\sigma\b1\b1^T$ and so $\bU^T\bx\in \ker \bS - \sigma\bz\bz^T.$ This implies that $\bU^T\ker \bD\subset \ker \bS - \sigma\bz\bz^T$ and $\dim \ker \bS-\sigma\bz\bz^T \geq n-r.$ However from the above, $\dim \ker \bS-\sigma\bz\bz^T \leq n-r.$ Hence we can conclude that $\lambda_2(\bS-\sigma\bz\bz^T) = 0.$ 
\end{proof}

The previous theorem along with Theorem~\ref{thm:Crecovery} guarantees that we can recover $\bG^\star$ as we increase the number of triplets sampled.  We summarize this in our final theorem, which follows directly from Theorems~\ref{thm:Crecovery} and \ref{drep}.

\begin{theorem}\label{thm:Grecovery}
Assume $n>d+2$ and consider the setting of Theorems~\ref{thm:nuclear_generalization} and \ref{thm:linear-operator-recovery}. As $|\S|\rightarrow \infty$, the maximum likelihood estimator 
\[
\widehat \bG = -\frac{1}{2}\bV(\widehat \bC+\lambda_2(\widehat \bC)\bJ)\bV
\]
converges to $\bG^\star$.
\end{theorem}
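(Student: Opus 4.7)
The plan is to chain together the two cited results and then invoke continuity at each stage. First, I would invoke Theorem~\ref{thm:Crecovery} with parameters chosen so its bound vanishes as $|\S|\to\infty$. Concretely, since $\bG^\star$ is rank at most $d$, the remark following Theorem~\ref{thm:nuclear_generalization} lets us pick $\lambda = \sqrt{d}\,n\,\gamma$ with $\gamma$ a fixed upper bound on $\|\bG^\star\|_\infty$, yielding a bound of order $O\!\left(\tfrac{L\gamma}{C_f^2}\sqrt{\tfrac{dn\log n + \log(1/\delta)}{|\S|}}\right)$ on $\tfrac{1}{2\binom{n}{2}}\|\widehat{\bC}-\bC^\star\|_F^2$. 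Since $n$ and $d$ are fixed, this converges to $0$ as $|\S|\to\infty$, so $\|\widehat{\bC}-\bC^\star\|_F\to 0$ (in probability, or almost surely via a Borel--Cantelli/union bound argument over $\delta = \delta_k$ along a sequence $|\S|_k$).

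Next I would apply Theorem~\ref{drep} to the ground truth: because $n>d+2$ and $\bD^\star$ is the distance matrix of $n$ points in $\R^d$, it decomposes as $\bD^\star = \bC^\star + \lambda_2(\bC^\star)\,\bJ$. The key transfer step is then to push convergence through the nonlinear map $\bC\mapsto \bC+\lambda_2(\bC)\bJ$. For this, I would use Weyl's inequality (or the Hoffman--Wielandt inequality) to get the Lipschitz bound $|\lambda_2(\widehat{\bC})-\lambda_2(\bC^\star)|\leq \|\widehat{\bC}-\bC^\star\|_{\mathrm{op}}\leq \|\widehat{\bC}-\bC^\star\|_F$. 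Combining with Step~1 gives $\lambda_2(\widehat{\bC})\to \lambda_2(\bC^\star)$, and hence
\[
\widehat{\bD} := \widehat{\bC}+\lambda_2(\widehat{\bC})\bJ \ \longrightarrow \ \bC^\star+\lambda_2(\bC^\star)\bJ = \bD^\star
\]
in Frobenius norm, by the triangle inequality.

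Finally, I would apply the continuous linear map $\bD\mapsto -\tfrac{1}{2}\bV\bD\bV$ (which is bounded in Frobenius norm since $\|\bV\|_{\mathrm{op}}=1$). This yields $\widehat{\bG} = -\tfrac{1}{2}\bV\widehat{\bD}\bV\to -\tfrac{1}{2}\bV\bD^\star\bV = \bG^\star$, where the last equality uses the Gram--distance identity from~\eqref{D2G}--\eqref{G2D} together with the centering $\bV\bG^\star\bV=\bG^\star$.

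The only step requiring a nontrivial ingredient beyond the cited theorems is the continuity of the second eigenvalue, which is clean via Weyl. The rest is essentially bookkeeping: verifying that the prediction/recovery bound of Theorem~\ref{thm:Crecovery} tends to zero for the fixed problem parameters $(n,d,\gamma)$, and propagating convergence through two continuous maps. I do not anticipate a serious obstacle; the real mathematical content has already been established in Theorems~\ref{thm:Crecovery} and~\ref{drep}.
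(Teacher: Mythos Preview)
Your proposal is correct and follows exactly the route the paper takes: the paper simply states that the result ``follows directly from Theorems~\ref{thm:Crecovery} and~\ref{drep},'' and you have carefully unpacked what that means. In fact, your write-up is more complete than the paper's, since you make explicit the continuity step for $\lambda_2$ via Weyl's inequality and the boundedness of the linear map $\bD\mapsto -\tfrac{1}{2}\bV\bD\bV$, both of which the paper leaves implicit.
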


\section{Experimental Study}
\label{exp}
The section empirically studies the properties of our estimators suggested by our theory. 
It is {\em not} an attempt to perform an exhaustive empirical evaluation of different embedding techniques; for that see \cite{jamieson2015next,tamuz2011adaptively,van2012stochastic,agarwal2007generalized}.  
In what follows each of the $n$ points is generated randomly: $\bx_i \sim \mathcal{N}(0,\tfrac{1}{2d} I_d) \in \R^d$, $i=1,\dots,n$,  motivated by the observation that $$\E[ |\langle \bL_t, \bG^\star \rangle| ] \ = \ \E\big[ \big| \, \|\bx_{i}-\bx_j\|_2^2 - ||\bx_i-\bx_k||_2^2\, \big| \big] \ \leq \ \E\big[ \|\bx_i-\ \bx_j\|_2^2 \big] \ = \ 2 \E\big[ \|\bx_i\|_2^2 \big] \ = \ 1$$ for any triplet $t=(i,j,k)$. We perform $36$ trials to see the effect of different random samples of triplets.

We report the prediction error on a holdout set of $10,000$ triplets and the error in Frobenius norm of the estimated Gram matrix. 

\begin{figure}
  \begin{center}
    \includegraphics[width=.45\textwidth]{./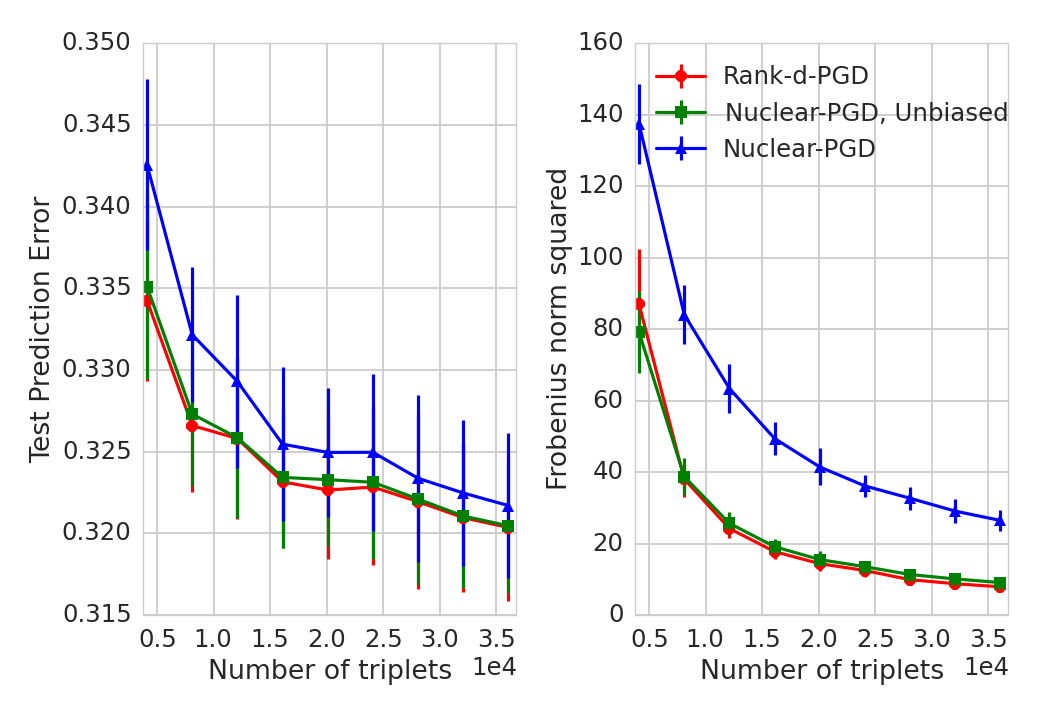} 
    \includegraphics[width=.45\textwidth]{./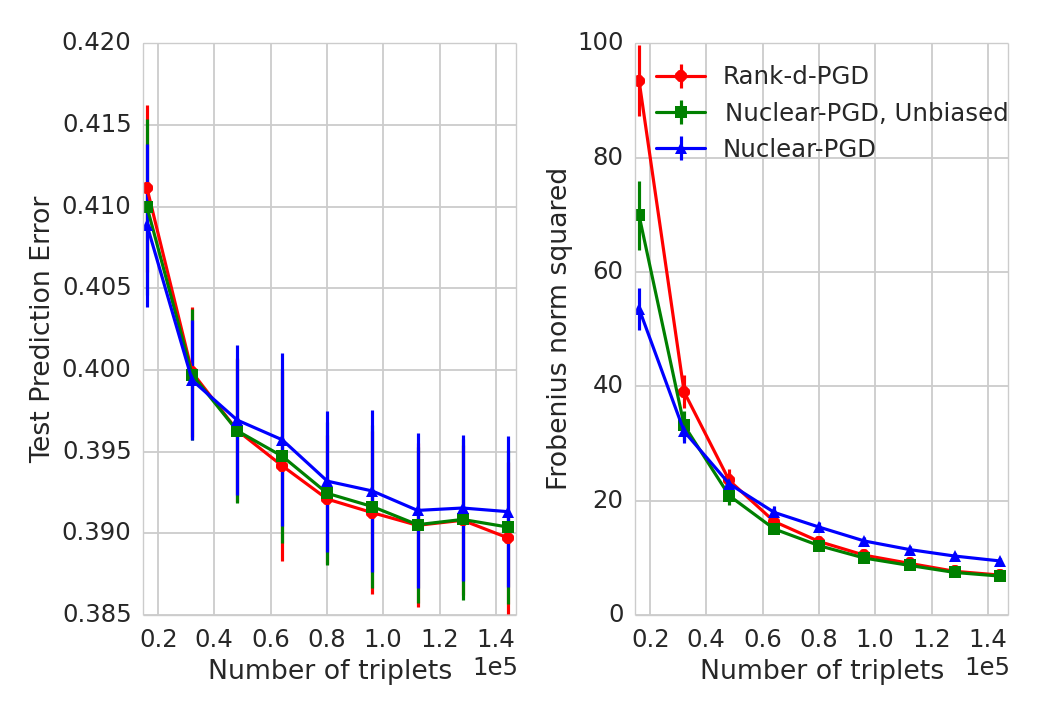}
  \end{center}
  \caption{$\bG^\star$ generated with $n=64$ points in $d=2$ and $d=8$ dimensions on the left and right.  \label{fig:varying_dim_small}}
\end{figure}

Three algorithms are considered. For each, the domain of the objective variable $\bG$ is the space of symmetric positive semi-definite matrices. None of the methods impose the constraint $\max_{ij}|G_{ij}|\leq \gamma$ (as done in our theoretical analysis), since this was used to simplify the analysis and does not have a significant impact in practice.  {\em Rank-d Projected Gradient Descent (PGD)} performs gradient descent on the objective $\widehat{R}_\S(\bG) = \frac{1}{|S|}\sum_{t \in S} \log( 1 + \exp(- y_t\langle \bL_t, \bG \rangle ) )$ with line search, projecting onto the subspace spanned by the top $d$ eigenvalues at each step (i.e. setting the bottom $n-d$ eigenvalues to $0$). {\em Nuclear Norm PGD} performs gradient descent on $\widehat{R}_\S (\bG)$ projecting onto the nuclear norm ball with radius $\|\bG^\star\|_*$, where $\bG^\star=\bX^T\bX$ is the Gram matrix of latent embedding. The nuclear norm projection can have the undesirable effect of shrinking the non-zero eigenvalues toward the origin.  To compensate for this potential bias, we also employ {\em Nuclear Norm PGD Debiased}, which takes the (biased) output of {\em Nuclear Norm PGD}, decomposes it into $\bU \bE \bU^T$ where $\bU \in \R^{n \times d}$
 are the top $d$ eigenvectors, and outputs $\bU \text{diag}(\widehat{\bs}) \bU^T$ where $\widehat{s} = \arg\min_{\bs \in \R^d} \widehat R_\S(\bU \text{diag}(\bs) \bU^T)$. This last algorithm is motivated by the fact observation that heuristics like minimizing  $\|\cdot\|_1$ or $\| \cdot \|_*$ are good at identifying the true support or basis of a signal, but output biased magnitudes \cite{rao2013conditional}.  {\em Rank-d PGD} and {\em Nuclear Norm PGD Debiased} are novel ordinal embedding algorithms.

Figure~\ref{fig:varying_dim_small} presents how the algorithms behave in $n=64$ and $d=2,8$ on the left and right, respectively. We observe that the unbiased nuclear norm solution behaves near-identically to the rank-$d$ solution in this case and remark that this was observed in all of our experiments (see the supplementary materials for a variety values of $n$, $d$, and scalings of $\bG^\star$). 
A popular technique for recovering rank $d$ embeddings is to perform (stochastic) gradient descent on $\widehat R_\S(\bU^T\bU)$ with objective variable $\bU \in \R^{n\times d}$ taken as the embedding \cite{jamieson2015next,tamuz2011adaptively,van2012stochastic}. 
In all of our experiments this method produced Gram matrices that were nearly identical to those produced by our Rank-$d$-PGD method, but Rank-$d$-PGD was an order of magnitude faster in our implementation. 
Also, in light of our isometry theorem, we can show that the Hessian of $\E[\widehat R_\S(\bG)]$ is nearly a scaled identity, which leads us to hypothesize that a globally optimal linear convergence result for this non-convex optimization may be possible using the techniques of \cite{oymak2015sharp,shen2016tight}.
Finally, we note that previous literature has reported that nuclear norm optimizations like {\em Nuclear Norm PGD} tend to produce less accurate embeddings than those of non-convex methods \cite{tamuz2011adaptively,van2012stochastic}.
We see from the results that the {\em Nuclear Norm PGD Debiased} appears to close the performance gap between the convex and non-convex solutions.

\section{Future work}
For any fixed set of $n$ points in $\R^d$ and randomly selected distance comparison queries, our results show that the component of $\bG^\star$ (the Gram matrix associated with the points) in the span of all possible distance comparisons can be accurately recovered from $O(dn\log n)$ queries, and we conjecture these bounds to be tight. Moreover, we proved the existence of an estimator $\widehat{\bG}$ such that as the number of queries grows, we have $\widehat{\bG} \rightarrow \bG^\star$. A focus of our ongoing work is characterizing finite sample bounds for the rate at which $\widehat{\bG} \rightarrow \bG^\star$. One way of approaching such a result is showing
\[ c_1\|\bD-\bD'\|_F^2 \ \leq \ \|\Delta(\bD)-\Delta(\bD')\|^2\ \leq \ c_2\|\bD-\bD'\|_F^2 \]
for the tightest possible constants $c_1$ and $c_2$.
By inspecting our proofs, we can provide satisfying values for $c_1$ and $c_2$, however, they differ by a factor of $n$ and hence we do not believe these to be tight. Empirically we observe that if $n\gg d+2$, then the ratio of $c_1$ to $c_2$ appears to be independent of $n$.

\newpage
\bibliographystyle{unsrt}
\bibliography{nips16_bib}
\newpage
\section{Supplementary Materials for ``Finite Sample Error Bounds for Ordinal Embedding''}
\subsection{Proof of Lemma 1}
\setcounter{lemma}{0}
\begin{lemma}  For all $t\in T$,
\[\lambda_{1}(\bL_t) = \|\bL_{t}\| =\sqrt{3} \]
in addition if $n\geq 3$
\[\|\E_{t}[\bL_{t}^2]\| = \frac{6}{n-1}\leq \frac{9}{n}\]
\end{lemma}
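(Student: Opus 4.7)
The plan is to exploit the fact that $\bL_t$ is supported on a single $3\times 3$ submatrix indexed by $\{i,j,k\}$, so everything reduces to a $3 \times 3$ calculation plus a combinatorial count.

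\textbf{Step 1: eigenvalues of $\bL_t$.} First I would work out the nonzero block
\[
M = \begin{pmatrix} 0 & -1 & 1 \\ -1 & 1 & 0 \\ 1 & 0 & -1 \end{pmatrix}.
\]
A direct multiplication gives $M^2 = 3I_3 - \b1_3 \b1_3^T$. Since $\b1_3 \b1_3^T$ has eigenvalues $3,0,0$, we read off that $M^2$ has eigenvalues $0, 3, 3$. Hence $M$ has eigenvalues $\pm \sqrt{3}$ and $0$ (the signs are consistent with $\tr(M)=0$), and the remaining $n-3$ eigenvalues of $\bL_t$ are zero. This immediately yields $\lambda_1(\bL_t) = \|\bL_t\| = \sqrt{3}$.

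\textbf{Step 2: structural form of $\bL_t^2$.} Let $\b1_{ijk} \in \R^n$ be the indicator of $\{i,j,k\}$ and $\bP_{ijk} = \diag(\b1_{ijk})$. Lifting the computation of $M^2$ back to $n\times n$ matrices, I would write
\[
\bL_t^2 \;=\; 3\,\bP_{ijk} - \b1_{ijk}\b1_{ijk}^T,
\]
since the $\{i,j,k\}\times\{i,j,k\}$ block has $2$'s on the diagonal and $-1$'s off-diagonal, and everything outside that block is zero.

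\textbf{Step 3: taking the expectation.} This is the only step requiring real thought, and the main obstacle is just the careful counting over $\T = \{(i,j,k): 1\le i\neq j\neq k \le n,\; j<k\}$, which has $|\T| = n\binom{n-1}{2}$. For a single index $a$, the number of $t\in\T$ with $a \in \{i,j,k\}$ is $\binom{n-1}{2} + (n-1)(n-2) = \tfrac{3}{2}(n-1)(n-2)$, giving $\P(a\in\{i,j,k\}) = 3/n$. For distinct $a,b$, counting the placements ($a=i$, $b=i$, or $i\notin\{a,b\}$) yields $3(n-2)$ triplets containing both, so $\P(\{a,b\}\subset\{i,j,k\}) = 6/[n(n-1)]$. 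Consequently
\[
\E_t[\bP_{ijk}] = \tfrac{3}{n}\bI, \qquad \E_t[\b1_{ijk}\b1_{ijk}^T] = \tfrac{3(n-3)}{n(n-1)}\bI + \tfrac{6}{n(n-1)}\b1\b1^T.
\]

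\textbf{Step 4: combining and bounding the norm.} Substituting into $\E_t[\bL_t^2] = 3\E_t[\bP_{ijk}] - \E_t[\b1_{ijk}\b1_{ijk}^T]$ and simplifying, the $\bI$ coefficient collapses to $\frac{6}{n-1}$ and the $\b1\b1^T$ coefficient to $-\frac{6}{n(n-1)}$, giving the clean form
\[
\E_t[\bL_t^2] \;=\; \frac{6}{n-1}\Bigl(\bI - \tfrac{1}{n}\b1\b1^T\Bigr) \;=\; \frac{6}{n-1}\,\bV.
\]
Since $\bV$ is an orthogonal projection (eigenvalues $1$ with multiplicity $n-1$ and $0$ with multiplicity $1$), its operator norm is $1$, so $\|\E_t[\bL_t^2]\| = \frac{6}{n-1}$. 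The bound $\frac{6}{n-1} \le \frac{9}{n}$ is equivalent to $n\ge 3$ by clearing denominators, completing the proof.
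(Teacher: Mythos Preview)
Your proof is correct and follows essentially the same approach as the paper's: compute the $3\times 3$ block of $\bL_t^2$, average over $\T$, and read off the spectrum of the resulting matrix. The differences are cosmetic --- the paper obtains $\lambda_1(\bL_t)=\sqrt{3}$ via the minimal polynomial $\bL_t^3-3\bL_t=0$ and writes the expectation as $\frac{6}{n}\bI-\frac{6}{n(n-1)}\bJ$, whereas you compute $M^2$ directly and package the expectation as $\frac{6}{n-1}\bV$; your identification of $\E_t[\bL_t^2]$ as a scalar multiple of the centering matrix $\bV$ is a nice touch that makes the norm computation immediate.
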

\begin{proof}
Note that $\bL_{t}^3-3\bL_t = 0$ for all $t\in \T.$ Thus by the Cayley-Hamilton theorem, $\sqrt{3}$ is the largest eigenvalue of $\bL_t.$ A computation shows that the submatrix of $\bL_t^2$ corresponding to $i,j,k$ is 
\[
\begin{pmatrix}
2 & -1 & -1\\
-1 & 2 & -1\\
-1 & -1& 2\\
\end{pmatrix}
\]
and every other element of $\bL_t^2$ is zero. Summing over the $t\in \T$ then gives,
\[\E[\bL_t^2] = \frac{1}{n\binom{n-1}{2}}\sum_{t\in \T} \bL_t^2= 
\begin{pmatrix}
\frac{6}{n} & \frac{-6}{n(n-1)} & \cdots & \frac{-6}{n(n-1)}\\ 
\frac{-6}{n(n-1)} & \frac{6}{n} & \hdots & \frac{-6}{n(n-1)}\\ 
\vdots & \hdots & \hdots & \vdots\\ 
\frac{-6}{n(n-1)} & \hdots & \frac{-6}{n(n-1)} &  \frac{6}{n}
\end{pmatrix}
\]

This matrix can be rewritten as $\frac{6}{n}I-\frac{6}{n(n-1)}\bJ.$ The eigenvalues of $\bJ$ are $-1$ with multiplicity $n-1$ and $n-1$ with multiplicity $1.$ Hence the largest eigenvalue of $\E[\bL_t^2]$ is $\frac{6}{n-1}.$
\end{proof}

\subsection{Proof of Theorem \ref{thm:linear-operator-recovery}}
\begin{proof}
For $y,z\in (0,1)$ let $g(z) =  z\log{\frac{z}{y}}+(1-z)\log{\frac{1-z}{1-y}}$. Then $g'(z) = \log{\frac{z}{1-z}}-\log{\frac{y}{1-y}}$ and $g''(z) = \frac{1}{z(1-z)}.$ By taking a Taylor series around $y$,
$$g(z) \geq \frac{(z-y)^2/2}{\sup_{x\in [0,1]} x(1-x)}\geq 2(z-y)^2.$$ 

Now applying this to $z = f(\pt)$ and $y = f(\pg)$ gives
\begin{eqnarray*}
f(\pt)\log{\tfrac{f(\pt)}{f(\pg)}}+(1-f(\pt))\log\tfrac{1-f(\pt)}{1-f(\pg)} &\geq 2(f(\pt)-f(\pg))^2\\
&\geq 2C_f^2 (\pt-\pg)^2
\end{eqnarray*} 
where the last line comes from applying Taylor's theorem to $f$, $f(x)-f(y) \geq \inf_{z\in [x,y]} f'(z) (x-y)$ for any $x,y$.
Thus
\begin{eqnarray*}
R(\bG)-R(\bG^\star) & = &\frac{1}{|\T|}\sum_{t\in \T} f(\pt)\log{\tfrac{f(\pt)}{f(\pg)}}+(1-f(\pt))\log\tfrac{1-f(\pt)}{1-f(\pg)} \\
&\geq &\frac{2C_f^2}{|\T|}\sum_{t\in T} (\pt-\pg)^2\\
&= &\frac{2C_f^2}{|\T|} \sum_{t\in T} (\langle \bL_t,  \bG - \bG^\star \rangle)^2= \frac{2C_f^2}{|\T|} \| \L( \bG) - \L(\bG^{\ast}) \|^2_2.
\end{eqnarray*}
\end{proof}

\subsection{Proof of Lemma 3}
\setcounter{lemma}{2}
\begin{lemma}
Let $\bD$ and $\bD'$ be two different distance matrices of $n$ points in $\R^d$ and $\R^{d'}$ respectively.  Let $\bC$ and $\bC'$ be the components of $\bD$ and $\bD'$ orthogonal to $J$. Then
$$n \|\bC-\bC'\|_F^2 \ \leq \ \|\Delta(\bC)-\Delta(\bC')\|^2 = \|\Delta(\bD)-\Delta(\bD')\|^2\ \leq \ 2(n-1)\|\bC-\bC'\|_F^2 \ .$$
\end{lemma}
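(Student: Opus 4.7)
The plan is to reduce the three-part inequality to an algebraic estimate on the single matrix $\bE := \bC - \bC'$, which is symmetric, hollow, and orthogonal to $\bJ$. The middle equality $\|\Delta(\bC) - \Delta(\bC')\|^2 = \|\Delta(\bD) - \Delta(\bD')\|^2$ is immediate from linearity of $\Delta$ together with Lemma~\ref{lemma_L}: writing $\bD = \bC + \sigma_D\bJ$ and $\bD' = \bC' + \sigma_{D'}\bJ$ and invoking $\Delta(\bJ) = 0$, the $\bJ$ components cancel, so $\Delta(\bD) - \Delta(\bD') = \Delta(\bC) - \Delta(\bC') = \Delta(\bE)$.

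For the two outer bounds I would compute $\|\Delta(\bE)\|^2$ in closed form. Grouping the sum over triples $t=(i,j,k)$ by the anchor index $i$ and applying the elementary identity
\[
\sum_{j<k}(v_j - v_k)^2 \;=\; m\|v\|^2 - (\b1^T v)^2, \qquad v\in\R^m,
\]
to the row vector $(E_{ij})_{j\neq i}$ with $m = n-1$, then summing over $i$ and using that $\bE$ is hollow (so that $\sum_i\sum_{j\neq i}E_{ij}^2 = \|\bE\|_F^2$ and $\sum_i(\sum_j E_{ij})^2 = \|\bE\b1\|^2$), I obtain the clean identity
\[
\|\Delta(\bE)\|^2 \;=\; (n-1)\|\bE\|_F^2 \;-\; \|\bE\b1\|^2.
\]
The upper bound then falls out immediately: dropping the non-negative term $\|\bE\b1\|^2$ and relaxing $n-1 \leq 2(n-1)$ yields $\|\Delta(\bE)\|^2 \leq 2(n-1)\|\bE\|_F^2$.

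The lower bound is the main obstacle, and it is the only place where the orthogonality $\bE \perp \bJ$ (equivalently $\b1^T \bE \b1 = 0$) must enter in an essential way: the task is to convert this single scalar constraint into an upper bound on $\|\bE\b1\|^2$ in terms of $\|\bE\|_F^2$. I would attack this spectrally. Writing $\bE = \sum_i \lambda_i \bv_i \bv_i^T$ with orthonormal eigenvectors $\bv_i$ and setting $\alpha_i := (\b1^T \bv_i)^2$, Parseval gives $\sum_i \alpha_i = n$, the $\bJ^\perp$ constraint reads $\sum_i \lambda_i \alpha_i = 0$, and the quantities of interest are $\|\bE\b1\|^2 = \sum_i \lambda_i^2 \alpha_i$ and $\|\bE\|_F^2 = \sum_i \lambda_i^2$. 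Maximizing $\sum_i \lambda_i^2 \alpha_i / \sum_i \lambda_i^2$ subject to $\alpha_i \geq 0$, $\sum_i \alpha_i = n$, and $\sum_i \lambda_i \alpha_i = 0$ — with the hollowness of $\bE$ invoked to rule out alignments of $\b1$ with a single eigenvector that would violate the diagonal constraint — becomes a constrained eigenvalue problem whose extremal value should deliver the required constant.
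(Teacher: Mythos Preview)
Your identity $\|\Delta(\bE)\|^2 = (n-1)\|\bE\|_F^2 - \|\bE\b1\|^2$ (full Frobenius norm) is correct and dispatches the middle equality and the upper bound cleanly. But it should have raised a flag for the lower bound: your own identity already gives $\|\Delta(\bE)\|^2 \leq (n-1)\|\bE\|_F^2$, so the target $n\|\bE\|_F^2 \leq \|\Delta(\bE)\|^2$ is impossible for any nonzero $\bE$. The paper is working in the upper-triangular vectorization $\R^{\binom{n}{2}}$, so its ``$\|\bC-\bC'\|_F^2$'' is half of yours; after that rescaling the lower bound you actually need is $\|\bE\b1\|^2 \leq \tfrac{n-2}{2}\|\bE\|_F^2$.

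Even with the normalization fixed, your spectral program does not reach the sharp constant. Over symmetric $\bE$ constrained only by $\b1^T\bE\b1=0$, the LP in $(\lambda_i,\alpha_i)$ yields $\sup\|\bE\b1\|^2/\|\bE\|_F^2 = n/2$, attained at $\bE=\bv_1\bv_1^T-\bv_2\bv_2^T$ with $\b1$ split evenly between $\bv_1$ and $\bv_2$; plugging back gives only $\|\Delta(\bE)\|^2 \geq \tfrac{n-2}{2}\|\bE\|_F^2$, not $\tfrac{n}{2}\|\bE\|_F^2$. The slack is precisely hollowness, but $\diag(\bE)=0$ is the system $\sum_i\lambda_i(\bv_i)_j^2=0$ for each $j$ --- $n$ constraints on the eigenvector \emph{entries}, not on your aggregates $\alpha_i=(\b1^T\bv_i)^2$. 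The remark about ``ruling out alignments that violate the diagonal constraint'' does not encode this, and there is no evident way to push the LP to the right constant without effectively rebuilding the eigenspace structure.

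The paper avoids the difficulty by never leaving $\mS^n_h$: it derives $[\Delta^T\Delta\bE]_{ij}=2(n-1)E_{ij}-\sum_m E_{im}-\sum_m E_{jm}$ (equivalent to your identity after pairing with $\bE$) and then exhibits the full eigendecomposition of this operator on $\mS^n_h$: kernel $\text{span}(\bJ)$, eigenvalue $2(n-1)$ on $\{\bE\in\mS^n_h:\bE\b1=0\}$, and eigenvalue $n$ on the span of the explicit matrices $\bD^{(m)}=-n(\be_m\b1^T+\b1\be_m^T-2\be_m\be_m^T)+2\bJ$. A dimension count shows these three pieces fill $\mS^n_h$, so $\bE\in\bJ^\perp$ lies in the direct sum of the latter two and both inequalities follow at once. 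Your route can be salvaged, but the missing ingredient is exactly this construction of the $n$-eigenspace inside the hollow cone.
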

We can view the operator $\Delta$ defined above as acting on the space $\R^{\binom{n}{2}}$ where each symmetric hollow matrix is identified with vectorization of it's upper triangular component. With respect to this basis $\Delta$ is an $n\binom{n-1}{2}\times \binom{n}{2}$ matrix, which we will denote by $\Delta$.
Since $\bC$ and $\bC'$ are orthogonal to the kernel of $\Delta,$ the lemma follows immediately from the following characterization of the eigenvalues of $\Delta^T\Delta.$
\setcounter{lemma}{5}

\begin{lemma}
$\Delta^T\Delta:\mS^n_h\rightarrow \mS^n_h$ has the following eigenvalues and eigenspaces,
\begin{itemize}
\item Eigenvalue $0$, with a one dimensional eigenspace.
\item Eigenvalue $n$, with a $n-1$ dimensional eigenspace.
\item Eigenvalue $2(n-1)$, with a $\binom{n}{2}-n$ dimensional eigenspace.
\end{itemize}
\end{lemma}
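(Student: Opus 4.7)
The plan is to compute the matrix $\Delta^T\Delta$ explicitly in the upper-triangle vectorization of $\mS^n_h$, read off three eigenspaces from the resulting closed form, and then close by a dimension count. Because $\Delta^T\Delta$ is real symmetric, identifying subspaces of total dimension $\binom{n}{2}$ inside three distinct eigenspaces will, by the spectral theorem, force these to be the complete spectrum.

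The first step---and the main obstacle---is the entrywise computation of $\Delta^T\Delta$. Each row of $\Delta$, indexed by $t=(i,j,k)$ with $j<k$, has a $+1$ in column $\{i,j\}$ and a $-1$ in column $\{i,k\}$. For the diagonal entry at pair $\{a,b\}$ with $a<b$, I count triplets in which $\{a,b\}$ appears as $\{i,j\}$ or $\{i,k\}$, splitting on whether $a$ or $b$ plays the role of $i$ and respecting $j<k$; the four subcases contribute $(n-b)+(n-a-1)+(b-2)+(a-1)=2(n-2)$, independent of $\{a,b\}$. For the off-diagonal entry at $\{a,b\}\neq\{c,d\}$, if the pairs share exactly one index then the unique triplet realizing the sign pattern contributes $-1$, and if they are disjoint the entry is $0$. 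Rewriting the resulting sum over one-index-sharing neighbors in terms of row sums $r_a:=\sum_c D_{ac}$ (using $D_{aa}=0$ to absorb the missing $D_{ab}$ term) yields the clean identity
\[
(\Delta^T\Delta\,\bD)_{ab} \;=\; 2(n-1)D_{ab}\;-\;r_a\;-\;r_b,\qquad a\neq b.
\]

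Three eigenspaces then fall out of this formula by Ansatz. For $\bJ=\b1\b1^T-\bI$ the row sum is $r_a=n-1$, so $(\Delta^T\Delta\,\bJ)_{ab}=2(n-1)-2(n-1)=0$ and $\bJ$ lies in the kernel. For any $\bD\in\mS^n_h$ with $\bD\b1=\b1 0$ every $r_a$ vanishes, so $\Delta^T\Delta\,\bD=2(n-1)\bD$, giving eigenvalue $2(n-1)$. For any $v\in\R^n$ with $\b1^T v=0$, set $\bD_v:=v\b1^T+\b1 v^T-2\,\diag(v)\in\mS^n_h$; its off-diagonal entries are $v_a+v_b$ and its row sums are $r_a=(n-2)v_a$, so a short calculation gives $(\Delta^T\Delta\,\bD_v)_{ab}=(v_a+v_b)\bigl(2(n-1)-(n-2)\bigr)=n(v_a+v_b)=n(\bD_v)_{ab}$.

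To finish, I verify the claimed dimensions. The span of $\bJ$ is $1$-dimensional. The map $v\mapsto \bD_v$ is injective for $n\geq 3$: from $v_a+v_b=0$ for all $a\neq b$ and three distinct indices $a,b,c$ one gets $v_b=v_c$, so $v$ is constant, and then $2v_1=0$ forces $v=0$. Restricting to the hyperplane $\b1^T v=0$ therefore gives an $(n-1)$-dimensional eigenspace for eigenvalue $n$. For eigenvalue $2(n-1)$, consider the row-sum map $Q:\bD\mapsto \bD\b1$; it sends the standard basis matrix $E^{(a,b)}\in\mS^n_h$ to $e_a+e_b$, and the identity $2e_a=(e_a+e_b)+(e_a+e_c)-(e_b+e_c)$ (valid for $n\geq 3$) shows these vectors span $\R^n$, so $Q$ is surjective and $\ker Q$ has dimension $\binom{n}{2}-n$. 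These three dimensions sum to exactly $\binom{n}{2}=\dim \mS^n_h$, so the three eigenspaces---automatically orthogonal by self-adjointness of $\Delta^T\Delta$---exhaust $\mS^n_h$ and are the complete list of eigenspaces with the claimed multiplicities.
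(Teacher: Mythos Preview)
Your proof is correct and follows the same route as the paper: derive the identity $(\Delta^T\Delta\,\bD)_{ab}=2(n-1)D_{ab}-r_a-r_b$ and then read off the three eigenspaces as the span of $\bJ$, the hollow matrices with $\bD\b1=\mathbf{0}$, and matrices of the form $\bD_v$ with $\b1^Tv=0$ (the paper's basis $\bD^{(i)}$ is exactly your $\bD_v$ for $v=\b1-n\,\be_i$). Your dimension arguments---injectivity of $v\mapsto\bD_v$ and surjectivity of $\bD\mapsto\bD\b1$ onto $\R^n$---are in fact tighter than the paper's, which asserts without justification that $\bD\b1=\mathbf{0}$ imposes $n$ independent constraints and exhibits only one linear relation among the $n$ matrices $\bD^{(i)}$.
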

\begin{proof}
The rows of $\Delta$ are indexed by triplets $t\in \T$ and columns indexed by pairs $i,j$ with $1\leq i<j\leq n$ and vice-versa for $\Delta^T.$ The row of $\Delta^T$ corresponding to the pair $i,j$ is supported on columns corresponding to triplets $t=(l,m,n)$ where $m<n$ and $l$ and one of $m$ or $n$ form the pair $i,j$ or $j,i$. Specifically, letting $[\Delta^T]_{(i,j), t}$ denote the entry of $\Delta^T$ corresponding to row $i,j$ and column $t$,
\begin{itemize}
\item if $l=i, m=j$ then $[\Delta^T]_{(i,j), t} = 1$
\item if $l=i, n=j$ then $[\Delta^T]_{(i,j), t} = -1$
\item if $l=j, m=i$ then $[\Delta^T]_{(i,j), t} = 1$
\item if $l=j, n=i$ then $[\Delta^T]_{(i,j), t} = -1$
\end{itemize} 
Using this one can easily check that
\begin{align}
[\Delta^T\Delta\bD]_{i,j} &= \sum_{(i,j,k)\in \T}D_{ij}-D_{ik}-\sum_{(i,k,j)\in \T}D_{ik}-D_{ij}+\sum_{(j,i,k)\in \T}D_{ji}-D_{jk}-\sum_{(j,k,i)\in \T} D_{jk}-D_{ji}\nonumber\\
&= 2(n-1)D_{ij} - \sum_{n\neq i} D_{in}-\sum_{n\neq j} D_{jn}. \label{deqn}
\end{align}

This representation allows us to find the eigenspaces mentioned above very quickly.

{\bf Eigenvalue $0.$} From the above discussion, we know the kernel is generated by $\bJ=\b1\b1^T-I$.

{\bf Eigenvalue $2(n-1).$} This eigenspace corresponds to all symmetric hollow matrices such that $\bD\b1 = {\bf 0}$.
For such a matrix each row and column sum is zero and so in particular, the sums in (\ref{deqn}) are both zero. Hence for such a $\bD$,
\[
[\Delta^T\Delta \bD]_{i,j} = 2(n-1)D_{ij} 
\]
The dimension of this subspace is $\binom{n}{2} - n$, indeed there are $\binom{n}{2}$ degree of freedom to choose the elements of $\bD$ and $\bD\b1 = {\bf 0}$ adds $n$ constraints. 

{\bf Eigenvalue $n.$} This eigenspace corresponds to the span of the matrices $\bD^{(i)}$ defined as,
\[
\bD^{(i)} = -n(\be_i\b1^T+\b1\be_i^T-2\be_i\be_i^T)+2\bJ
\]
where $\be_i$ is the standard basis vector with a $1$ in the $i$th row and $0$ elsewhere.
As an example,
\[\bD^{(1)} = 
\begin{pmatrix}
0&-n+2& \cdots& -n+2 \\
\vdots& 2&\cdots& 2\\
-n+2&2&\cdots& 0
\end{pmatrix}.
\]
If $i,j\neq m$, then $D_{ij}^{(m)}:=[\bD^{(m)}]_{ij} = 2$, and we can compute the row and column sums 
\[
\sum_{n\neq i} D_{in}^{(m)} = \sum_{n\neq j} D_{jn}^{(m)} = 2(n-2)-n+2 = n-2.
\]
This implies that $D_{ij}^{(m)} = n$, and so by (\ref{deqn})
\begin{align*}
[\Delta^T\Delta \bD^{(m)}]_{i,j} &= 2(n-1)\cdot 2-(n-2)-(n-2)\ = \ 2n \ = \ nD_{ij}^{(m)} \ .
\end{align*}
Otherwise, without loss of generality we can assume that $i = m, j\neq m$ in which case, $[\bD^{(m)}]_{ij} = -n+2$, the row and columns sums can be computed as
\[
\sum_{n\neq i} D_{in}^{(m)} = (n-1)(-n+2)
\]
and
\[
\sum_{n\neq j} D_{in}^{(m)} = n-2.
\]
Putting it all together,
\begin{align*}
[\Delta^T\Delta \bD^{(m)}]_{m,j} &= 2(n-1)\cdot(-n+2)-(n-1)(-n+2)-(n-2)\\
&= (n-1)(-n+2) +(-n+2)\\
&= n(-n+2)\\
&= nD_{m,j}^{(m)}
\end{align*}
and $\Delta^T\Delta \bD = n \bD.$ Note that the dimension of $\text{span}{\langle \bD^{(i)}\rangle } = n-1$ since
\[
\sum_{m} \bD^{(m)} = 0.
\]
\end{proof}

\subsection{Proofs of Lemmas 4 and 5}
\setcounter{lemma}{3}
\begin{lemma}
Let $\bD$ be a Euclidean distance matrix on $n$ points. Then $\bD$ is negative semidefinite on the subspace 
\[
\b1^{\perp} := \{\bx\in \R^n| \b1^T\bx = 0\}.
\]
Furthermore, $\ker(\bD)\subset \b1^{\perp} $. 
\end{lemma}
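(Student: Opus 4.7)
The plan is to exploit the explicit factorization of a Euclidean distance matrix in terms of the Gram matrix of the underlying points. Writing $\bX = [x_1,\ldots,x_n]\in \R^{d\times n}$, setting $\bG = \bX^T\bX$, and letting $\mathbf{a}\in\R^n$ denote the vector of squared norms $a_i = \|x_i\|^2 = G_{ii}$, the identity $D_{ij} = \|x_i\|^2 + \|x_j\|^2 - 2\langle x_i,x_j\rangle$ yields
\[\bD \;=\; \mathbf{a}\b1^T + \b1\mathbf{a}^T - 2\bG.\]
This identity is the workhorse for both parts of the lemma: it transfers the PSD property of $\bG$ into a (conditional) NSD property of $\bD$. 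Since distance matrices are translation invariant, I may further center the configuration so that $\sum_i x_i = \mathbf{0}$; this in turn gives $\bG\b1 = \bX^T(\bX\b1) = \mathbf{0}$, a simplification I will use for the second claim.

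For the first claim, I would plug any $\bv\in\b1^\perp$ directly into the quadratic form. Because $\b1^T\bv = 0$, both rank-one pieces $\mathbf{a}\b1^T$ and $\b1\mathbf{a}^T$ annihilate and
\[\bv^T\bD\bv \;=\; -2\bv^T\bG\bv \;\leq\; 0,\]
where the inequality is PSD-ness of $\bG$. Thus $\bD$ is negative semidefinite on $\b1^\perp$.

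For the containment $\ker(\bD)\subset\b1^\perp$, I would not attack the quadratic form directly, since $\bv^T\bD\bv = 0$ is a priori weaker than $\bD\bv = \mathbf{0}$. Instead I will use the full vector identity $\mathbf{a}(\b1^T\bv) + \b1(\mathbf{a}^T\bv) - 2\bG\bv = \mathbf{0}$ and hit it with $\b1^T$ on the left; with the centering assumption this kills $\b1^T\bG\bv$ and yields $(\b1^T\mathbf{a})(\b1^T\bv) + n(\mathbf{a}^T\bv) = 0$, which solves for $\mathbf{a}^T\bv = -(\b1^T\mathbf{a})(\b1^T\bv)/n$. Substituting this back into the vector identity and then taking the inner product with $\bv$ collapses everything to
\[\bv^T\bG\bv \;=\; -\frac{(\b1^T\mathbf{a})(\b1^T\bv)^2}{n}.\]
The left-hand side is nonnegative by PSD-ness of $\bG$; the right-hand side is nonpositive since $\b1^T\mathbf{a} = \sum_i \|x_i\|^2 \geq 0$. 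Hence both sides vanish, and assuming the configuration is nondegenerate (not all points at the origin in the centered frame, equivalently $\bD\neq 0$), we must have $\b1^T\bv = 0$.

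The main subtlety is resisting the temptation to conclude $\b1^T\bv = 0$ from the conditional NSD property alone: $\bD$ has a nontrivial kernel in general, so $\bv^T\bD\bv = 0$ does not by itself constrain $\bv$ to $\b1^\perp$. The crucial move is to keep $\bD\bv = \mathbf{0}$ as a vector equation long enough to exploit the centering identity $\bG\b1 = \mathbf{0}$, which decouples the $\mathbf{a}$-direction from $\bG\bv$, and only then contract with $\bv^T$ to bring in PSD-ness of $\bG$. The trivial case $\bD = 0$ is vacuous and handled by inspection.
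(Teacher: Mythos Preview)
Your proof is correct and rests on the same idea as the paper's: the positive semidefiniteness of the Gram matrix, transferred to $\bD$ via the identity linking the two. For the first claim the arguments are essentially identical (you use $\bD = \mathbf{a}\b1^T + \b1\mathbf{a}^T - 2\bG$, the paper uses the equivalent $\bG = -\tfrac{1}{2}\bV\bD\bV$ together with $\bV\bx = \bx$ on $\b1^\perp$).

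For the kernel inclusion the tactics diverge slightly. You keep $\bD\bv=\mathbf 0$ as a vector identity, hit it with $\b1^T$ (using $\bG\b1=\mathbf 0$ from centering) to solve for $\mathbf a^T\bv$, then contract with $\bv^T$ to reach $\bv^T\bG\bv = -\tfrac{1}{n}(\b1^T\mathbf a)(\b1^T\bv)^2$. The paper instead writes the single sandwich $0\le -\tfrac12\,\bx^T\bV\bD\bV\bx$, expands $\bV=I-\tfrac1n\b1\b1^T$, and uses $\bD\bx=\mathbf 0$ to collapse it directly to $-\tfrac{1}{2n^2}(\b1^T\bD\b1)(\b1^T\bx)^2\le 0$. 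Both routes arrive at the same sign contradiction forcing $\b1^T\bv=0$; the paper's version is a touch more compact (no intermediate solve for $\mathbf a^T\bv$), while yours makes the role of the centering assumption $\bG\b1=\mathbf 0$ more explicit. Your handling of the degenerate case $\bD=0$ matches the paper's implicit assumption that $\b1^T\bD\b1>0$.
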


\begin{proof}
The associated Gram matrix $\bG =  -\frac{1}{2}\bV\bD\bV$ is a positive semidefinite matrix. For $\bx\in \b1^{\perp}$, $Jx = -x$  so 
\[
x^T\left(-\frac{1}{2}\bV\bD\bV\right)x = -\frac{1}{2} \bx^T\bD\bx \leq 0  
\]
establishing the first part of the theorem. Now if $\bx\in \ker \bD$,
\begin{align*}
0 \leq -\frac{1}{2} \bx^T\bV\bD\bV\bx = -\frac{1}{2}\bx^T\b1\b1^T\bD\b1\b1^T\bx = -\frac{1}{2}\b1^T\bD\b1 (\b1^T\bx)^2\leq 0 \ , 
\end{align*}
where the last inequality follows from the fact that $\b1^T\bD\b1 > 0$ since $\bD$ is non-negative. Hence $\b1^T \bx = 0$ and $\ker \bD\subset \b1^{\perp}.$
\end{proof}

Let $\lambda_i(\bM)$ denote the $i$-th largest eigenvalue of a matrix $\bM.$

\begin{lemma}
If $\bD$ is an $n\times n$ distance matrix of rank $r$, then
\begin{itemize}
\item $\bD$ has a single positive eigenvalue
\item $\bD$ has $n-r$ zero eigenvalues. 
\item $\bD$ has $r-1$ negative eigenvalues.
\end{itemize}
\end{lemma}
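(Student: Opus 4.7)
The plan is to combine Lemma 4 with the min–max (Courant--Fischer) variational characterization of eigenvalues and the fact that a distance matrix has zero diagonal, hence zero trace.

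First I would show $\bD$ has at most one strictly positive eigenvalue. By the Courant--Fischer theorem,
\[
\lambda_2(\bD) \;=\; \min_{v\neq 0}\;\max_{x\perp v,\;\|x\|=1} x^T\bD x.
\]
Choosing $v=\b1$ makes the inner maximization run over the unit sphere in $\b1^\perp$, and Lemma 4 gives $x^T\bD x\le 0$ for every $x\in\b1^\perp$. Hence $\lambda_2(\bD)\le 0$, so at most one eigenvalue of $\bD$ is strictly positive.

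Second, I would count zero eigenvalues. Since $\bD$ has rank $r$, the dimension of $\ker(\bD)$ is exactly $n-r$, giving $n-r$ zero eigenvalues. The second part of Lemma 4 tells us these null directions all lie in $\b1^\perp$, which is consistent with $\bD$ being negative semidefinite on that subspace. Combined with the previous step, the remaining $n-1$ eigenvalues $\lambda_2(\bD),\ldots,\lambda_n(\bD)$ are all $\le 0$, and exactly $n-r$ of them are zero; the other $r-1$ among $\lambda_2,\ldots,\lambda_n$ must therefore be strictly negative.

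Third, I would confirm $\lambda_1(\bD)>0$ using the trace. Since a distance matrix has zero diagonal, $\mathrm{tr}(\bD)=\sum_{i=1}^n\lambda_i(\bD)=0$, so
\[
\lambda_1(\bD)\;=\;-\sum_{i=2}^n\lambda_i(\bD)\;=\;\sum_{i:\;\lambda_i(\bD)<0}|\lambda_i(\bD)|.
\]
Provided $r\ge 1$ (true whenever $\bD\neq 0$, which always holds for a nontrivial distance matrix), the previous step produced at least one strictly negative eigenvalue, so this sum is strictly positive, confirming that $\bD$ has exactly one positive eigenvalue.

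The main technical care point is the first step: $\b1^\perp$ is \emph{not} $\bD$-invariant, so I cannot naively restrict $\bD$ to $\b1^\perp$ and read off eigenvalues there. The min–max characterization sidesteps this because it does not require invariance — it only needs the quadratic form to be nonpositive on a subspace of codimension one, which is exactly what Lemma 4 provides. Everything else is bookkeeping with the rank and trace.
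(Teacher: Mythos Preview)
Your argument is correct and follows essentially the paper's approach: Courant--Fischer together with Lemma~4 gives $\lambda_2(\bD)\le 0$, and the rest is counting with the rank. The paper obtains $\lambda_1(\bD)>0$ directly from $\b1^T\bD\b1>0$ (entrywise nonnegativity) rather than via the trace; your trace route also works, but note the slight circularity in your presentation---step~2 places all $n-r$ zero eigenvalues among $\lambda_2,\dots,\lambda_n$ before step~3 confirms $\lambda_1\neq 0$---which is harmless once you observe first that $\lambda_1=0$ together with $\lambda_i\le 0$ for $i\ge 2$ and $\mathrm{tr}(\bD)=0$ forces $\bD=0$.
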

\begin{proof}
From above we know that $r\leq d+2.$ We use the Courant-Fisher theorem \cite{HornJohnson} .
\begin{align*}
\lambda_1(\bD) &= \max_{\dim S = 1} \min_{0\neq \bx\in S} \frac{\bx^T \bD \bx}{\bx^T\bx}\\
&\geq \frac{\b1^T \bD\b1}{\b1^T\b1} > 0. 
\end{align*}

So we see the largest eigenvalue of $\bD$ is necessarily positive. Now for $2\leq k\leq n$, 
\begin{align*}
\lambda_k(\bD) &= \min_{\dim S = k} \max_{0\neq \bx\in S} \frac{\bx^T \bD \bx}{\bx^T\bx}\\
&\leq \min_{U\subset 1^{\perp},\dim U = k} \max_{0\neq \bx\in U} \frac{\bx^T\bD\bx}{\bx^T\bx}\\ 
& \leq 0,
\end{align*}
where the last step follows from the negative definiteness of $\bD$ on $\b1^{\perp}$.
\end{proof}

\section{Additional Empirical Results}

\begin{figure}
\centering
\begin{minipage}{.45\textwidth}
\begin{center}
    \includegraphics[width=\imwidth]{./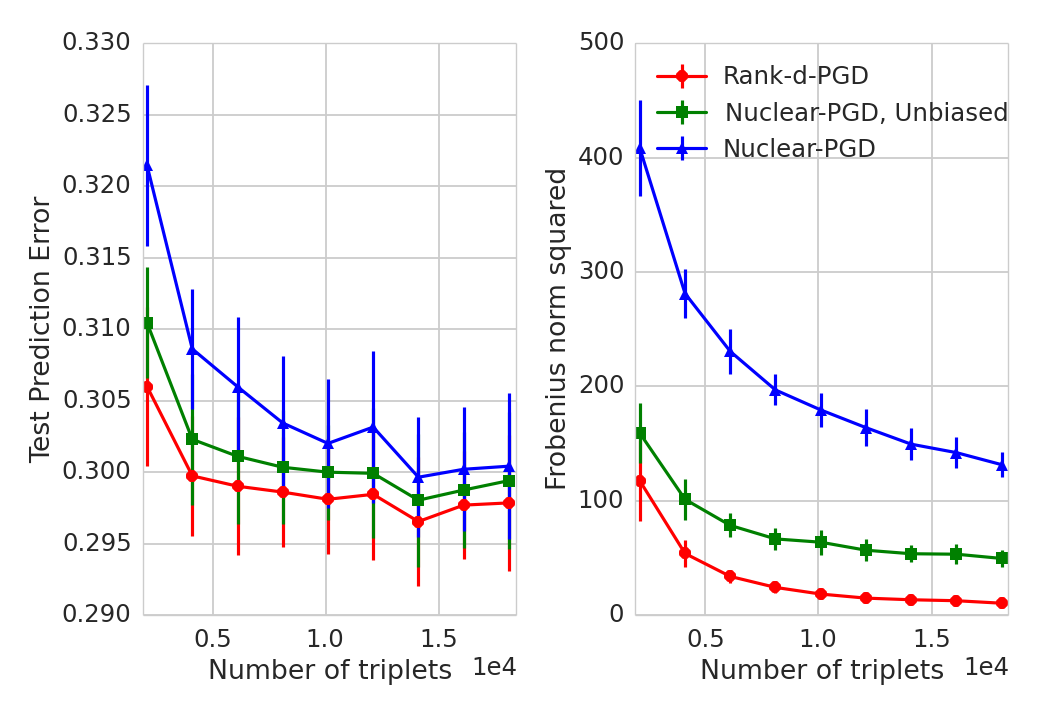} \\
    \includegraphics[width=\imwidth]{./experiments/mds_a53e3a6b627c50a5000ee5f11ae0e8a8.png} \\
    \includegraphics[width=\imwidth]{./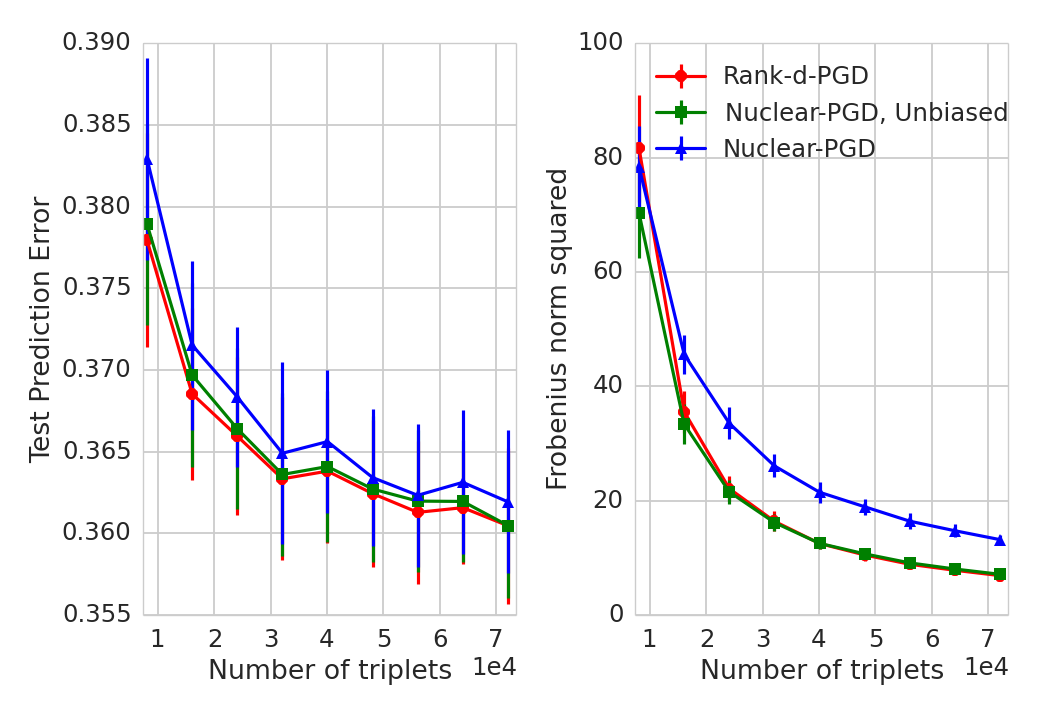} \\
    \includegraphics[width=\imwidth]{./experiments/mds_a0be90060c30c01f0957c62b04c7274e.png}
  \end{center}
  \caption{\textbf{Varying dimension} $n=64$, $d = \{1,2,4,8\}$ from top to bottom. \label{fig:varying_dimension}}
\end{minipage}
\hspace{.25cm}
\begin{minipage}{.45\textwidth}
  \begin{center}
    \includegraphics[width=\imwidth]{./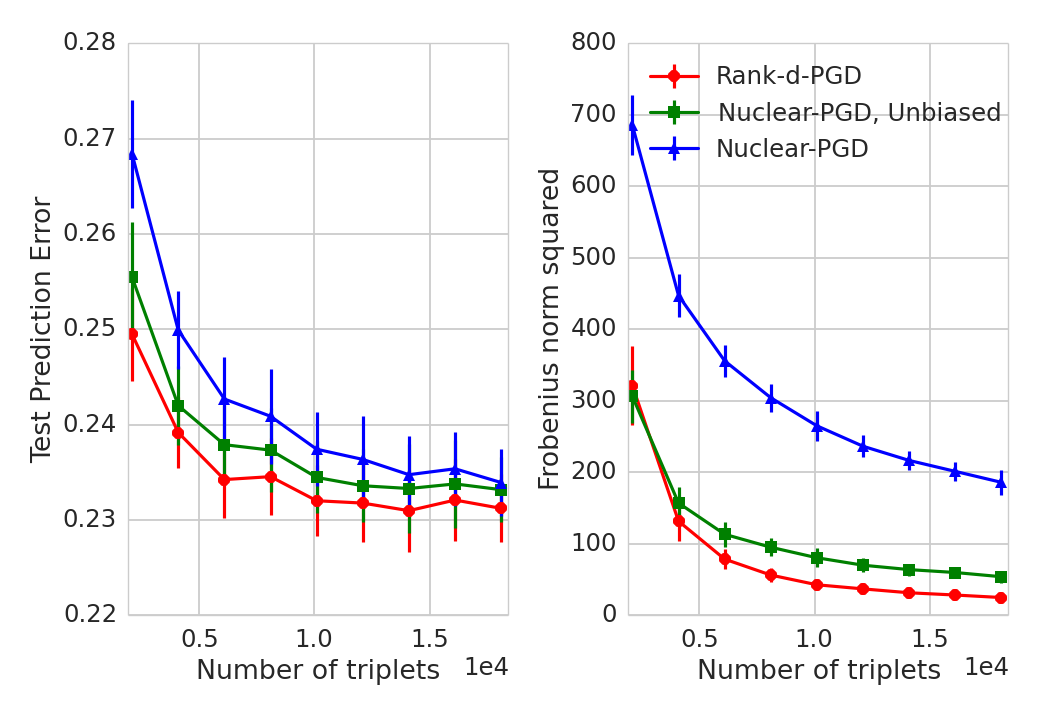} \\
    \includegraphics[width=\imwidth]{./experiments/mds_a53e3a6b627c50a5000ee5f11ae0e8a8.png} \\
    \includegraphics[width=\imwidth]{./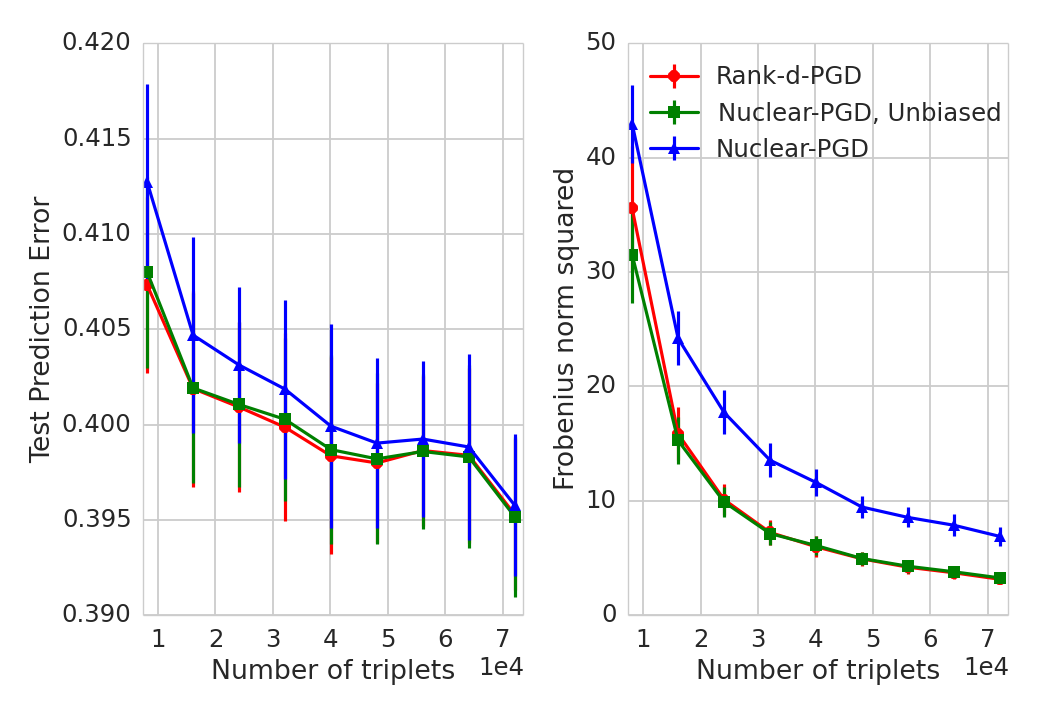} \\
    \includegraphics[width=\imwidth]{./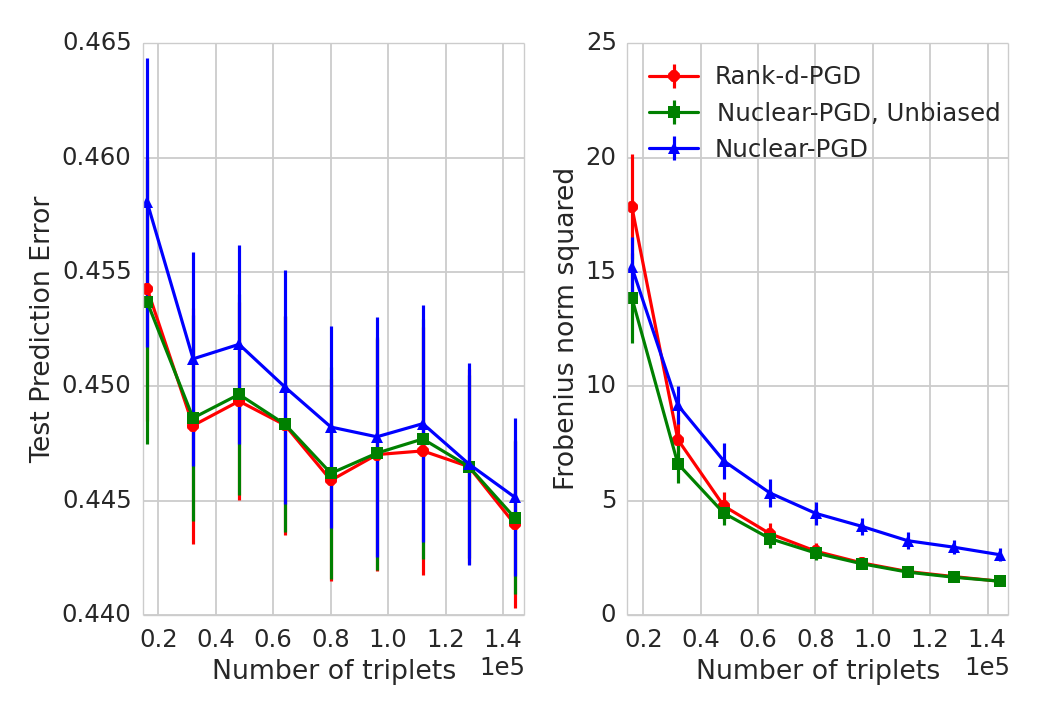}
  \end{center}
  \caption{\textbf{Varying Noise} $n=64$, $d=2$, $\alpha M$ for $\alpha = \{2,1,\tfrac{1}{2},\tfrac{1}{4}\}$ from top to bottom. \label{fig:varying_noise}}
\end{minipage}
\end{figure}

\begin{figure}[!ht]
  \begin{minipage}{.45\textwidth}
  \begin{center}
    \includegraphics[width=\imwidth]{./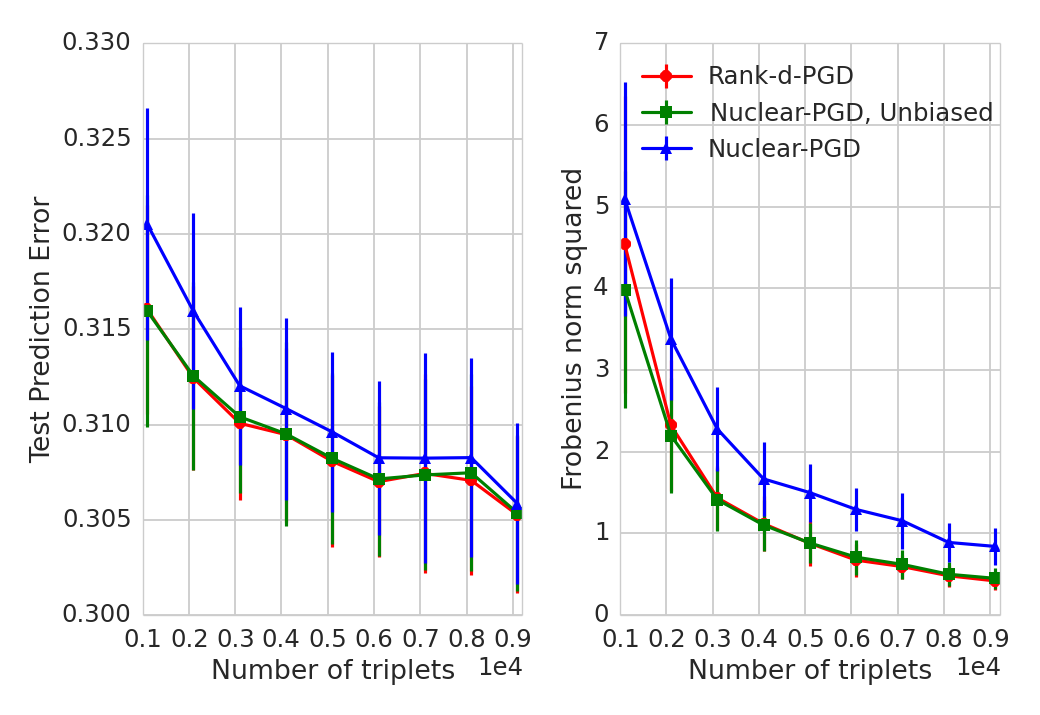} \\
    \includegraphics[width=\imwidth]{./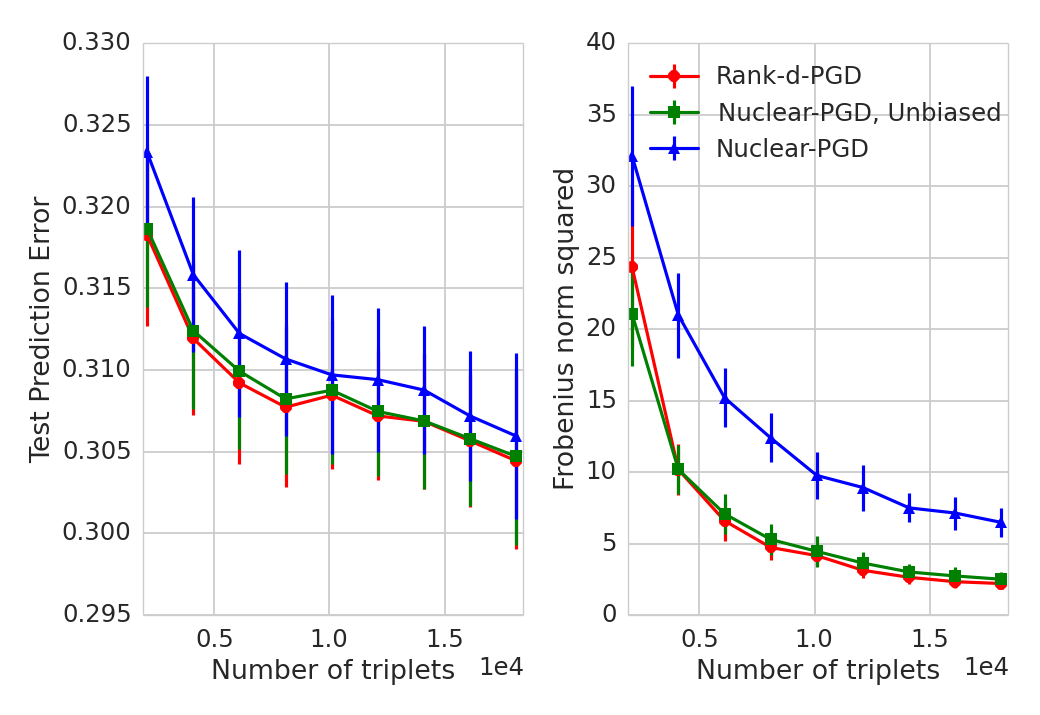} \\
    \includegraphics[width=\imwidth]{./experiments/mds_a53e3a6b627c50a5000ee5f11ae0e8a8.png} \\
    \includegraphics[width=\imwidth]{./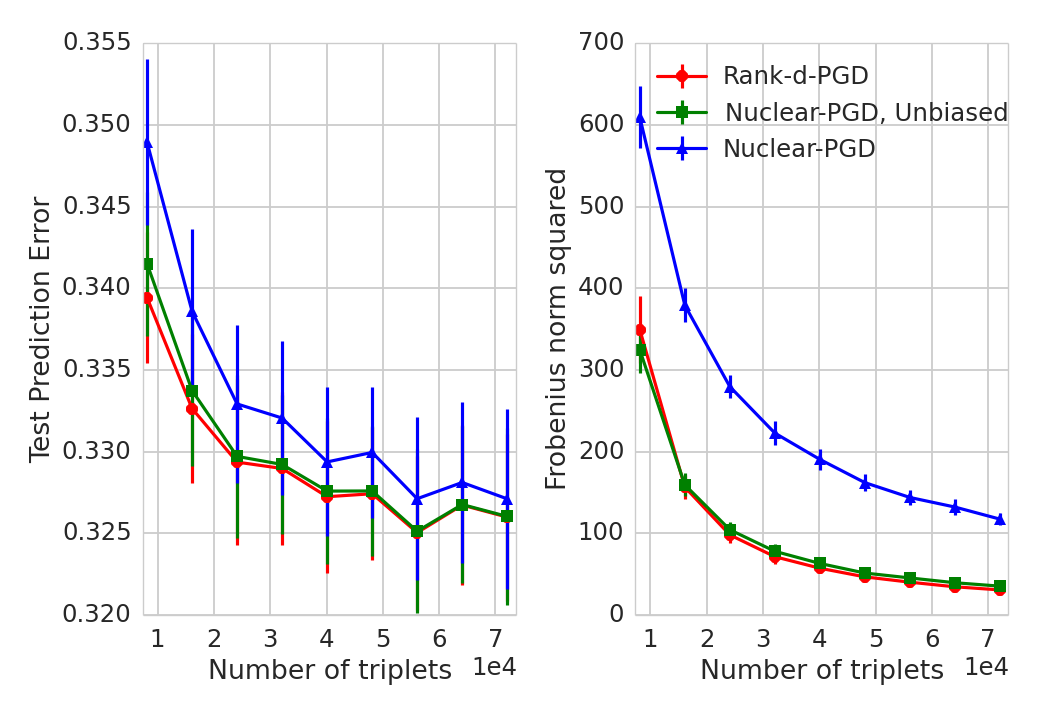} \\
  \end{center}
  \caption{\textbf{Varying $\#$ items} $n = \{16,32,64,128\}$, $d=2$ from top to bottom.}
  \end{minipage}
\end{figure}

% \include{projected_convergence}

%\end{document}
\end{document}